\newcommand{\todoy}[2][]{\todo[size=\tiny,color=red!20!white,#1]{Yasin: #2}}
\newcommand{\todoc}[2][]{\todo[size=\tiny,color=green!20!white,#1]{Csaba: #2}}
\newif\ifcomm
\newif\iflong
\newtheorem{thm}{Theorem}
\newtheorem{cor}[thm]{Corollary}
\newtheorem{lem}[thm]{Lemma}
\newtheorem{prop}[thm]{Proposition}
\newcounter{assumption}
\renewcommand{\theassumption}{A\arabic{assumption}}
\newenvironment{ass}[1][]{\begin{trivlist}\item[] \refstepcounter{assumption}%
 {\bf Assumption\ \theassumption\ {\em (#1)} } }{
 \ifvmode\smallskip\fi\end{trivlist}}
\newtheorem{example}{Example}[section]
\newcommand{\norm}[1]{\left\Vert#1\right\Vert}
\newcommand{\smallnorm}[1]{\|#1\|}
\newcommand{\abs}[1]{\left\vert#1\right\vert}
\newcommand{\trace}{\mathop{\rm trace}}
\newcommand{\Real}{\mathbb R}                        
\newcommand{\real}{\mathbb R}                        
\newcommand{\Prob}[1]{{\mathbb P}\left(#1\right)}    
\newcommand{\EE}[1]{{\mathbb E}\left[#1\right]}      
\newcommand{\one}[1]{\mathbf 1 \left\{#1\right\}}           
\newcommand{\ra}{\rightarrow}
\newcommand{\argmin}{\mathop{\rm argmin}}
\newcommand{\eqdef}{\stackrel{\mbox{\rm\tiny def}}{=}}
\newcommand{\beq}{\begin{equation}}
\newcommand{\eeq}{\end{equation}}
\newcommand{\beqa}{\begin{eqnarray}}
\newcommand{\eeqa}{\end{eqnarray}}
\newcommand{\beqan}{\begin{eqnarray*}}
\newcommand{\eeqan}{\end{eqnarray*}}
\newcommand{\ben}{\begin{eqnarray*}}
\newcommand{\een}{\end{eqnarray*}}
\newcommand{\bea}{\begin{align*}}
\newcommand{\eea}{\end{align*}}
   \newcommand\comm[1]{\textcolor{blue}{ #1}}
   \newcommand{\mtodo}[2]{\todo{{\bf #1}: #2}} 
   \def\here#1{{\bf $\langle\langle$#1$\rangle\rangle$}}
   \newcommand\comm[1]{}
   \newcommand{\mtodo}[2]{}
   \def\here#1{}
\newcommand{\cZ}{{\cal Z}}
\newcommand{\cX}{{\cal X}}
\newcommand{\cA}{{\cal A}}
\newcommand{\cB}{{\cal B}}
\newcommand{\cR}{{\cal R}}
\newcommand{\cF}{{\cal F}}
\renewcommand{\phi}{\varphi}
\newcommand{\cS}{{\cal S}}
\newcommand{\ttop}{^\top}
\newcommand{\cN}{{\cal N}}
\newcommand{\hTh}{\widehat\Theta}
\newcommand{\TTh}{\widetilde\Theta}
\newcommand{\SPD}{\mathbb{S}^+}
\newcommand{\PSD}{\mathbb{S}^+}
\newcommand{\pistab}{\pi_{\mathrm{stab}}}
\title{Bayesian Optimal Control of Smoothly Parameterized Systems:\\
The Lazy Posterior Sampling Algorithm}
\author{
Yasin Abbasi-Yadkori \\
Queensland University of Technology\\ 
\texttt{yasin.abbasiyadkori@qut.edu.au} \\
\and
Csaba Szepesvari \\
University of Alberta \\
\texttt{szepesva@ualberta.ca} \\
}
\begin{document}

\maketitle

\begin{abstract}
We study Bayesian optimal control of a general class of smoothly parameterized Markov decision problems. Since computing the optimal control is computationally expensive, we design an algorithm that trades off performance for computational efficiency. The algorithm is a \textit{lazy} posterior sampling method that maintains a distribution over the unknown parameter. The algorithm changes its policy only when the variance of the distribution is reduced sufficiently. Importantly, we analyze the algorithm and show the precise nature of the performance vs. computation tradeoff. Finally, we show the effectiveness of the method on a web server control application.  
\end{abstract}

\section{Introduction}

The topic of this paper is Bayesian optimal control,
where the problem is to design a policy that achieves optimal performance on the average
over  control problem instances that are randomly sampled from a given distribution. 
This problem naturally arises when the goal is to design a controller for mass-produced systems, where  production is imperfect but the errors follow a regular pattern and the goal is to maintain a good average performance
over the controlled systems, rather than to achieve good performance even for the system with the largest errors.

In a Bayesian setting, the optimal policy (which exists under appropriate regularity conditions) is history dependent.
Given the knowledge of the prior, the transition dynamics and costs, 
the problem in a Bayesian setting is to find an efficient way to \emph{calculate} 
the actions that the optimal policy would take 
given some history.
Apart from a few special cases, the optimal policy is not available analytically and
requires the evaluation of an intractable integral over action sequences \citep{Martin67:book},
forcing one to resort to use some suboptimal policy.
The question is then how well the suboptimal policy performs compared to the optimal (but infeasible) policy.

This question was answered in a certain sense for finite state and action spaces by 
 \citet{Asmuth-Li-Littman-Nouri-Wingate-2009} and \citet{Kolter-Ng-2009}.
Both works propose specific computationally efficient algorithms, 
which are shown to be $\epsilon$-Bayes-optimal with probability $1-\delta$
with the exception of $O(\textrm{poly}(1/\epsilon))$ many steps,
where for both algorithms $\epsilon$ and $\delta$ are both part of the input.
 %
While \citet{Kolter-Ng-2009} suggest to add an exploration bonus to the rewards while using the mean estimates for the transition probabilities and considers a finite horizon setting, \citet{Asmuth-Li-Littman-Nouri-Wingate-2009} consider discounted total rewards and a variant of posterior sampling,
originally due to \citet{Thompson-1933} and first adapted to reinforcement learning by \citet{Strens-2000}.
More recently, the algorithm of \citet{Strens-2000} was revisited by \citet{Osband-Russo-VanRoy-2013} in the context of episodic, finite MDPs.
Unlike the previously mentioned algorithms, the posterior sampling algorithm requires neither
the target accuracy $\epsilon$, nor the failure probability $\delta$ as its inputs.
Rather, the guarantee presented 
by \citet{Osband-Russo-VanRoy-2013}
 is that the algorithm's regret, i.e., the excess cost due to not following
the optimal policy, is bounded by $\widetilde{O}(\sqrt{T})$%
 \footnote{$\widetilde{O}(\cdot)$ hides poly-logarithmic factors.}
both with high probability and in expectation.
Of course, the above is just a small sample of the algorithms developed for Bayesian reinforcement learning,
but as far as we know, apart from these, there are no works on algorithms that come with theoretical guarantees.
Nevertheless, the reader interested in further algorithms 
may consult the papers of \citet{Vlassis-Ghavamzadeh-Mannor-Poupart-2012} and \citet{Guez-Silver-Dayan-2013}
who give an excellent overview of the literature. 


The starting point of our paper is that of \citet{Osband-Russo-VanRoy-2013}.
In particular, just like \citet{Osband-Russo-VanRoy-2013}, 
 we study the posterior sampling algorithm of \citet{Strens-2000}.
However, unlike \citet{Osband-Russo-VanRoy-2013}, we allow the state-action space to be infinite (subject to some regularity conditions discussed later) and we consider the \emph{infinite horizon, continuing setting}.
In fact, our results show that the issue of the cardinality of the state-action space is secondary to whether there exists a compact description of the uncertainty of the controlled system; here we consider the case when the \emph{dynamics is smoothly parameterized in some unknown parameters}.
As proposed by \citet{Strens-2000}, the algorithm works in phases:
At the beginning of each phase, a policy is computed based on solving the optimal control problem for a random parameter vector drawn from the posterior over the parameter vectors. The algorithm keeps the policy until the parameter uncertainty is reduced by a substantial margin, when a new phase begins and the process is repeated.

An important element of the algorithm is that the lengths of phases is adjusted to the uncertainty left.
While in the case of episodic problems the issue of how long a policy should be kept does not arise, 
in a continuing problem with no episodic structure, 
if policies are changed too often, performance will suffer (see, e.g., Example~1 of \citet{Guez-Silver-Dayan-2014}).
To address this challenge, for  non-episodic problems, \citet{Strens-2000} suggested that the lengths of 
phases should be adjusted to the ``planning horizon''
\citep{Strens-2000}, which however, is ill-defined for the average cost setting that we consider in this paper.
The idea of ending a phase when uncertainty is reduced by a significant margin
goes back at least to \citet{Jaksch-Ortner-Auer-2010}.

Our main result shows that under appropriate conditions, 
 the expected regret of our algorithm
 is $\tilde{O}(\sqrt{T}+\Sigma_T)$, where $T$ is the number of time steps and
  $\Sigma_T$ is controlled by the precision with which the optimal control problems are solved, 
  thus providing an explicit bound on the cost of using imprecise calculations.  
Thus, the main result of the paper shows that 
	 near-optimal Bayesian optimal control is possible as long as we can efficiently sample 
	 from the parameter posteriors and if we can efficiently solve the arising 
	 classical optimal control problems.
Our proofs combine the proof techniques of \citet{Osband-Russo-VanRoy-2013} with that of \citet{Abbasi-Yadkori-Szepesvari-2011}.




\section{Problem Setting}

We consider problems when
the transition dynamics is parameterized with a matrix \todoc{Why does it have to be a matrix!!?!?}
$\Theta_*\in \real^{m\times n}$, which  is sampled at time $0$ (before the interaction with the learner starts) 
from a known prior $P_0$ with support $\cS\subset \real^{m\times n}$.
Let $\cX\subset\Real^n$ be the state space and $\cA\subset\Real^d$ be the action space,
	$x_t\in\cX$ be the state at time $t$ and $a_t\in\cA$ be the action at time $t$, which is chosen based on 
	$x_1,a_t,\ldots,a_{t-1},x_t$. 
It is assumed that $x_1$ is sampled from a fixed distribution (although, it should become clear later that this assumption is not necessary).
For $M\succeq 0$ positive semidefinite, define  
$\norm{\Theta}^2_M = \norm{ \Theta^\top M \Theta}_2$, where $\norm{\cdot}_2$ denotes the spectral norm of matrices.
The set of positive semidefinite $m\times m$ matrices will be denoted by $\SPD(m)$.
Our main assumption concerning the transition law is as follows:
\if0
\begin{ass}[Linearly Parameterized Dynamics]
\label{ass:lindyn}
Given $x_t,a_t$ the next state $x_{t+1}$ is chosen from a probability kernel $p(\cdot|x,a,\Theta_*)$
such that for some $p_w(\cdot|x,a)$ probability kernel over $\cX\times \cA$ and 
some ``feature map'' $\phi:\cX\times \cA \ra \real^m$,
\beq
\label{eq:nonlinear-dynamics}
x_{t+1} = \Theta_*\ttop \phi(x_t,a_t) + w_{t+1}\,,
\eeq
where  and $w_{t+1}\sim p_w(\cdot|x_t,a_t)$.
In particular, the distribution of $w_{t+1}$ is independent of $\Theta_*$ given $\cF_t$.
\end{ass}
\fi
\begin{ass}[Smoothly Parameterized Dynamics]
\label{ass:lindyn}
The next state satisfies $x_{t+1} = f(x_t,a_t,\Theta_*,z_{t+1})$, where $z_{t+1}\sim U[0,1]$
is independent of the past and $\Theta_*$.
Further, there exists a (known) map $M: \cX \times \cA \to \SPD(m)$ such that
for any $\Theta,\Theta'\in \cS$, 
if  $y = f(x,a,\Theta,z)$, $y'=f(x,a,\Theta',z)$ with $z\sim U[0,1]$,
then
$\EE{ \norm{y - y'} } \le \norm{\Theta-\Theta'}_{M(x,a)}$.
\todoc{Relaxing this? Taylor series expansion of $f$?}
\end{ass}
\todoc{Note that 
$\EE{ \norm{y - y'}^2 } \le \norm{\Theta-\Theta'}^2_{M(x,a)}$ (which might be easier to prove) is sufficient to meet this assumption. Indeed, if this latter is satisfied, we also have
$\EE{ \norm{y-y'} } \le \EE{ \norm{y-y'}^2}^{\frac12} \le \norm{\Theta-\Theta'}_{M(x,a)}$.
}
The first part of the assumption just states that given $\Theta_*$, the dynamics is Markovian with state $x_t$,
while the second part demands that small changes in the parameter lead to small changes in the next state.
That the map is ``known'' allows us to use it in our algorithms.
Examples of systems that fit our assumptions 
include finite MDPs (where the state is represented by unit vectors) and systems with linear dynamics (i.e., when $x_{t+1} = A x_t + B a_t + w_{t+1}$, where $w_{t+1}\sim p_w(\cdot|x_t,a_t)$).
\todoc{Need to remove ``linear parameterization'' from intro etc.}

The goal is to design a controller (also known as a policy) that at every time step $t$, 
	based on past states $x_1,\ldots,x_t$ and actions $a_1,\ldots,a_{t-1}$, 
	selects an action $a_t$ so as to minimize the expected long-run average loss 
	$\EE{ \limsup_{n\ra\infty} \frac1n\sum_{t=1}^n \ell(x_t,a_t)}$.
We consider any noise distribution and any loss function $\ell$ as long as a boundedness assumption on the variance and a smoothness assumption on the {\em value function} are satisfied (see Assumptions~\ref{ass:bnd-var} and~\ref{ass:ACOE}-\ref{ass:grad} below). 
It is important to note that we allow $\ell$ to be a nonlinear function of the last state-action pair, i.e., the framework allows one to go significantly beyond the scope of linear quadratic control as many nonlinear control problems can be transformed into a linear form (but with a nonlinear loss function) using the so-called dynamic feedback linearization techniques \citep{Isidori95}. \todoc{Follow up in conclusion}

As it was mentioned before, 
computing a (Bayesian) optimal policy $\pi^*$ which achieves the best expected long-run average loss is computationally challenging \citep{Martin67:book}. \todoc{Any hardness results?}
Hence, one needs to resort to approximations.
To measure the suboptimality of an arbitrary (history dependent) policy $\pi$, 
we use the (expected) regret $R_T(\pi)$ of $\pi$:
\[
R_T(\pi) = \EE{ \sum_{t=1}^T \ell(x_t^\pi,a_t^\pi) -  \sum_{t=1}^T \ell(x_t^{\pi^*}, a_t^{\pi^*}) } \,.
\] 
Here, $(x_t^{\pi},a_t^{\pi})_{t=1}^T$ denotes the state-action trajectory that results from following the policy $\pi$.
The slower the regret grows, the closer is the performance of $\pi$ to that of an optimal policy. If the growth rate of $R_T(\pi)$ is sublinear ($R_T(\pi) = o(T))$, the average loss per time step will converge to the optimal average loss as $T$ gets large and in this sense we can say that $\pi$ asymptotically-optimal. Our main result shows that,
under some conditions, the construction of such asymptotically-optimal policies 
can be reduced to the ability of efficiently sampling from the posterior of $\Theta_*$ and being able to
solve classical (non-Bayesian) optimal-control problems.
Furthermore, our main result also implies that $R_T(\pi) = \widetilde{O}(\sqrt{T})$.

\section{The \textsc{Lazy  PSRL} Algorithm}


Our algorithm is an instance of the posterior sampling reinforcement learning (PSRL) 
\citep{Osband-Russo-VanRoy-2013}.
As explained beforehand, this algorithm was first proposed by \citet{Strens-2000} in the context
of discounted MDPs and later analyzed by \citet{Osband-Russo-VanRoy-2013} for undiscounted episodic problems.
To emphasize that the algorithm keeps the current policy for a while, we call it \textsc{Lazy PSRL}.
The pseudocode of the algorithm is shown in Figure~\ref{alg:Thompson-nonlin}. 

Recall that $P_0$ denotes the prior distribution of the parameter matrix $\Theta_*$.
Let $P_t$ denote  the posterior of $\Theta_*$ at time $t$ based on $x_1,a_1,\ldots,a_{t-1},x_t$ and  $\tau_t < t$  the last round when the algorithm chose a policy.
Further, let $V_t  = V + \sum_{s=1}^{t-1} M(x_s,a_s)$, where $V$ is an $m\times m$ positive definite matrix.
Then,  at time $t$, Lazy PSRL sets $\widetilde \Theta_t = \widetilde \Theta_{t-1}$ unless 
 $\det(V_t) > 2 \det(V_{\tau_t})$ in which case it chooses $\TTh_t$ from the posterior $P_t$:
$\widetilde \Theta_t\sim P_t$.
The action taken at time step $t$  is
a near-optimal action for the system whose transition dynamics is specified by $\TTh_t$.
We assume that a subroutine, $\pi^*$, taking the current state $x_t$ and the parameter $\TTh_t$ is available to calculate such an action. The inexact nature of calculating a near-optimal action will also be taken in our analysis.

\begin{figure}[tbh]
\begin{center}
\framebox{\parbox{12cm}{
\begin{algorithmic}
\STATE {\bf Inputs}: $P_0$, the prior distribution of $\Theta_*$, $V$.
\STATE $V_{\textrm{last}} \leftarrow V$, $V_0 = V$.
\FOR{$t=1,2,\dots$}
\IF{$\det(V_t)>2 \det(V_{\textrm{last}})$}
\STATE Sample $\TTh_{t}\sim P_t$.
\STATE $V_{\textrm{last}} \leftarrow V_t$.
\ELSE
\STATE $\TTh_{t} \leftarrow \TTh_{t-1}$.
\ENDIF
\STATE Calculate near-optimal action $a_t \leftarrow \pi^*(x_t, \TTh_t)$.
\STATE Execute action $a_t$ and observe the new state $x_{t+1}$.
\STATE Update posterior of $\Theta_*$ with $(x_t,a_t,x_{t+1})$ to obtain $P_{t+1}$.
\STATE Update $V_{t+1} \leftarrow V_t + M(x_t,a_t)$.
\ENDFOR
\end{algorithmic}
}}
\end{center}
\caption{Lazy PSRL for smoothly parameterized control problems}
\label{alg:Thompson-nonlin}
\end{figure}

\section{Results for Bounded State- and Feature-Spaces}

In this section, we study problems with a bounded state space. The number of states might be infinite, but we assume
that the norm of the state vector is bounded by a constant. Before stating our main result, we state some extra assumptions.
In Section~\ref{sec:examples}, we will show that these assumptions are met for some interesting special cases.

The first assumption is a restriction on the prior distribution such that a ``variance term'' remains bounded by a constant. 
\todoc{Emphasize in discussion that this assumption replaces the concentration argument of Osband et al.}
\begin{ass}[Concentrating Posterior]
\label{ass:bnd-var}
Let $\tilde{\cF}_t = \sigma(x_1,a_1,\ldots,a_{t-1},x_t)$ be the $\sigma$-algebra generated by observations up to time $t$.
There exists a positive constant $C$ such that for any $t\ge 1$, 
for some $\widehat \Theta_t$ $\tilde{\cF}_t$-measurable random variable,
letting $\Theta_t' \sim P_t$ it holds that
\footnote{We use $\norm{v}$ to denote the $2$-norm of vector $v$.} $\max_{\Theta\in \{\Theta_t',\Theta_*\}}
\EE{\norm{(\Theta - \widehat\Theta_{t})\ttop V_{t}^{1/2}}^2} \le C \; \quad \textrm{a.s.}$ 
\end{ass}
The idea here is that $\widehat \Theta_t$ is an estimate of $\Theta_*$ based on past information available at time $t$,
such as a maximum aposteriori (MAP) estimate.
Since $V_t$ is increasing at a linear rate, the assumption requires 
that $\widehat\Theta_{t}$ converges to $\Theta$ at an $O(1/\sqrt{t})$ rate.
When $\Theta = \Theta_*$, this means that $\widehat\Theta_{t}$ should converge to $\Theta_*$ at this rate, which is indeed what we expect.
When $\Theta = \Theta_t'$, again, we expect this to be true since $\Theta_t'$ is expected to be in the $O(1/\sqrt{t})$ vicinity of $\Theta_*$.
We emphasize that our approach is to reduce the problem to studying the ``variance like terms", as opposed to e.g. \citet{Osband-Russo-VanRoy-2013} who reduce the question directly to a UCB type argument. 

Our next assumption concerns the existence of 
``regular'' solutions to the average cost optimality equations (ACOEs):
\begin{ass}[Existence of Regular ACOE Solutions]
\label{ass:ACOE}
The following hold:
\begin{enumerate}[(i)]
\item \label{ass:ACOE-exist}
There exists $H>0$ such that
for any $\Theta\in \cS$, there exist a scalar $J(\Theta)$ and a function $h(.,\Theta):\cX\ra [0,H]$ 
that satisfy the following average cost optimality equation (ACOE) for any $x\in\cX$: 
\beq
\label{eq:ACOE1}
J(\Theta) + h(x,\Theta) =  \min_{a\in \cA} \left\{ \ell(x,a) + \int h(y,\Theta) p(d y\, |\, x,a,\Theta) \right\}\;,
\eeq
where $p(\cdot|x,a,\Theta)$ is the next-state distribution given state $x$, action $a$ and parameter $\Theta$. 
\item \label{ass:grad}
There exists $B>0$ such that for all $\Theta \in \cS$, and for all $x,x^\prime\in \cX$, $\abs{h(x,\Theta)-h(x^\prime,\Theta)}\leq B \norm{x-x^\prime}$. 
\end{enumerate}
\end{ass}


With a slight abuse of the concepts, we will call the quantity $J(\Theta)$ the average loss of the optimal policy, while function $h(.,\Theta)$ will be called the value function (for the system with parameter $\Theta$). 
The review paper by \citet{Arapostathis-Borkar-et-al-1993} gives a number of sufficient (and sometimes necessary) conditions that guarantee that a solution to ACOE exists. In this paper, we assume that the required conditions are satisfied. 
Lipschitz continuity usually follows from that of the transition dynamics and the losses.
\todoc[size=\tiny]{Will this be satisfied in the examples? Under the Lipschitzness of the cost and the transition dynamics? Which norm is used in the assumption?}

A uniform lower bound on $h$ follows, for example if the immediate cost function $\ell$ is lower bounded.
Then, if the state space is bounded, uniform boundedness of the functions $h(\cdot,\Theta)$ follows from their uniform Lipschitzness:
\todoc{Future work: Extension to $\ell$ depending on $\Theta$. Relaxing uniform boundedness?}
\begin{prop}\label{lem:rangevaluefun}
Assume that the value function $h(\cdot,\Theta)$ is bounded from below ($\inf_x h(x,\Theta)>-\infty$) and is $B$-Lipschitz.
Then, if the diameter of the state space is bounded by $X$ (i.e., $\sup_{x,x'\in \cX}\norm{x-x'} \le X$) then
there exists a solution $h'(\cdot,\Theta)$ to  \eqref{eq:ACOE1} such that the range of $h$ is included in $[0,BX]$.
\todoc[size=\tiny]{LQ problems with bounded state spaces: This is unclear at this point. Return to this later.}
\end{prop}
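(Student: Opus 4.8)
The plan is to exploit the fact that the ACOE \eqref{eq:ACOE1} is invariant under adding a constant to its solution, and then to combine this normalization with the Lipschitz property and the diameter bound.

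First I would observe that if $h(\cdot,\Theta)$ solves \eqref{eq:ACOE1} with average cost $J(\Theta)$, then for any constant $c\in\R$ the shifted function $h(\cdot,\Theta)+c$ is again a solution with the same $J(\Theta)$. Indeed, since $p(\cdot\,|\,x,a,\Theta)$ is a probability kernel, $\int (h(y,\Theta)+c)\, p(dy\,|\,x,a,\Theta) = c + \int h(y,\Theta)\, p(dy\,|\,x,a,\Theta)$, so adding $c$ to $h$ adds $c$ to both sides of \eqref{eq:ACOE1} and leaves the minimizing action unchanged. Given this, I would set $c = -\inf_{x\in\cX} h(x,\Theta)$, which is finite because $h(\cdot,\Theta)$ is bounded from below by hypothesis, and define $h'(\cdot,\Theta) = h(\cdot,\Theta) + c$. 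By construction $\inf_{x\in\cX} h'(x,\Theta) = 0$, hence $h'(x,\Theta)\ge 0$ for all $x$, which yields the lower endpoint of the claimed range. Moreover, adding a constant does not change differences, so $h'(\cdot,\Theta)$ inherits the $B$-Lipschitz property of $h(\cdot,\Theta)$.

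For the upper bound I would use the Lipschitz property together with the diameter bound. Fix $x\in\cX$ and $\epsilon>0$; since $\inf_{y\in\cX} h'(y,\Theta)=0$, there is $y_\epsilon\in\cX$ with $h'(y_\epsilon,\Theta) < \epsilon$, and then $B$-Lipschitzness together with $\norm{x-y_\epsilon}\le X$ gives $h'(x,\Theta) \le h'(y_\epsilon,\Theta) + B\norm{x-y_\epsilon} < \epsilon + BX$. Letting $\epsilon\to 0$ yields $h'(x,\Theta)\le BX$. Combining the two estimates, the range of $h'(\cdot,\Theta)$ is contained in $[0,BX]$, as required.

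I do not expect a genuine obstacle here; the only point requiring care is verifying the shift-invariance of \eqref{eq:ACOE1}, which hinges on $p(\cdot\,|\,x,a,\Theta)$ being a probability (rather than sub-probability) kernel, so that constants integrate to themselves and both sides of the equation absorb the shift identically. The boundedness-from-below hypothesis is precisely what guarantees that the normalizing constant $c$ is finite.
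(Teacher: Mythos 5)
Your proposal is correct and follows essentially the same route as the paper's proof: exploit the shift-invariance of \eqref{eq:ACOE1}, normalize so the infimum of the shifted solution is $0$, and then use $B$-Lipschitzness with the diameter bound $X$ to get the upper bound $BX$. The only difference is that you carry out explicitly the $\epsilon$-argument for the case where the infimum is not attained, which the paper dismisses as a trivial extension.
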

Proof is in Appendix~\ref{app:proofs}. 
\if0
For a state $x$ and a parameter $\Theta$, denote by $a(x, \Theta)$ a minimizer of the right-hand side of \eqref{eq:ACOE1} (if multiple minimizers exist, pick one).
Under some assumptions, one can also show that $a(\cdot,\Theta)$ 
 defines an optimal stationary policy.
In particular, by Theorem~6.1 of \cite[]{Arapostathis-Borkar-et-al-1993}, if the optimal average cost is the same no matter what the initial state is, \todoc[size=\tiny]{Is this implied by~\eqref{eq:ACOE1}?}
 the following assumption is sufficient for $a(\cdot,\Theta)$ to be optimal when the parameter is $\Theta$:
\begin{ass}[Slowly Growing Value Functions]
\label{ass:optpol}
Let $x_t^\pi$ be the state at time $t$ under policy $\pi$. For any parameter matrix $\Theta$ in $\cS$, for any $x\in \cX$, it holds that 
\beq
\label{eq:slow-h}
\lim_{t\ra \infty} \EE{ \frac{h(x_t^\pi, \Theta)}{t}\ \middle| \ x_0^\pi = x } = 0 \; .
\eeq
\end{ass}
This assumption holds, for example, when the value functions are bounded. See \citep{Arapostathis-Borkar-et-al-1993} for conditions under which \eqref{eq:slow-h} holds (the paper also offers other conditions under which $a(\cdot,\Theta)$ is optimal).
\fi
\if0
We also make an assumption on the smoothness of the value function:
\begin{ass}[Uniform Lipschitz Continuity of Value Functions]
\label{ass:grad}
There exists $B>0$ such that for all $\Theta \in \cS$, and for all $x,x^\prime\in \cX$, $\abs{h(x,\Theta)-h(x^\prime,\Theta)}\leq B \norm{x-x^\prime}$. 
\end{ass}
\fi
Finally, we assume that the map $M: \cX\times \cA \to \PSD(m)$ is bounded: 
\begin{ass}[Boundedness]
\label{ass:feature-mapping}
There exist $\Phi>0$ such that for all $x\in \cX$ and $a\in\cA$, $\trace(M(x,a))\le \Phi^2$. 
\end{ass} 
This assumption may be strong. In the next section we discuss  an extension of the result of this section to the case when this assumption is not met.







The main theorem of this section bounds the regret of Lazy PSRL.
In this result, we allow the oracle to return an $\sigma_t$-suboptimal action, where $\sigma_t>0$. 
By this, we mean that the action $a_t$ satisfies
\beq
\label{eq:stsuboptimality}
\ell(x_t,a_t)+\int h(y,\widetilde{\Theta}_t) p(dy|x_t,a_t,\widetilde{\Theta}_t) 
\le 
\min_{a\in \cA}
\left\{\ell(x_t,a)+\int h(y,\widetilde{\Theta}_t) p(dy|x_t,a,\widetilde{\Theta}_t) \right\}+\sigma_t\,.
\eeq
\begin{thm}
\label{thm:LQR++}
Assume that  \ref{ass:lindyn}--\ref{ass:feature-mapping} hold for some values of $C,B,X,\Phi>0$. 
Consider Lazy PSRL where in time step $t$, the action chosen is $\sigma_t$-suboptimal.
Then, for any time $T$, the regret of  Lazy PSRL satisfies $R_T = \widetilde O\left( \sqrt{T} \right) + \Sigma_T$, where $\Sigma_T = \sum_{t=1}^T \EE{\sigma_t}$ 
and the constant hidden by $\widetilde{O}(\cdot)$ depends on $V,C,B,X$ and $\Phi$.
\end{thm}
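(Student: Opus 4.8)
The plan is to control the algorithm's cumulative loss through the average-cost optimality equation of Assumption~\ref{ass:ACOE} and to compare everything against the common reference level $T\,\EE{J(\Theta_*)}$. For the comparator side, I would first observe that no policy can outperform the $\Theta_*$-informed optimal controller, so $\EE{\sum_{t=1}^T \ell(x_t^{\pi^*},a_t^{\pi^*})}\ge \EE{\sum_{t=1}^T \ell(x_t^{\mathrm{oracle}},a_t^{\mathrm{oracle}})}$; telescoping the ACOE for $\Theta_*$ along the oracle trajectory together with $h(\cdot,\Theta_*)\in[0,H]$ yields $\EE{\sum_t \ell^{\mathrm{oracle}}}=T\,\EE{J(\Theta_*)}+\EE{h(x_1,\Theta_*)-h(x_{T+1},\Theta_*)}\ge T\,\EE{J(\Theta_*)}-H$. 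Thus it suffices to prove $\EE{\sum_{t=1}^T \ell(x_t,a_t)}\le T\,\EE{J(\Theta_*)}+\widetilde{O}(\sqrt T)+\Sigma_T$ for the algorithm.

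For the algorithm's loss, plugging the $\sigma_t$-suboptimality \eqref{eq:stsuboptimality} into the ACOE for the sampled parameter $\TTh_t$ gives $\ell(x_t,a_t)\le J(\TTh_t)+h(x_t,\TTh_t)-\int h(y,\TTh_t)\,p(dy\mid x_t,a_t,\TTh_t)+\sigma_t$. I would decompose the one-step Bellman error as
\[
h(x_t,\TTh_t)-\int h(y,\TTh_t)\,p(dy\mid x_t,a_t,\TTh_t)=(\mathrm{T})_t+(\mathrm{M})_t+(\mathrm{E})_t,
\]
where $(\mathrm{T})_t=h(x_t,\TTh_t)-h(x_{t+1},\TTh_t)$, $(\mathrm{M})_t=h(x_{t+1},\TTh_t)-\int h(y,\TTh_t)\,p(dy\mid x_t,a_t,\Theta_*)$, and $(\mathrm{E})_t=\int h(y,\TTh_t)\,[\,p(dy\mid x_t,a_t,\Theta_*)-p(dy\mid x_t,a_t,\TTh_t)\,]$. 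Since $\TTh_t$ is constant within a phase, $(\mathrm{T})_t$ telescopes and each phase contributes at most $H$; the determinant-doubling test together with Assumption~\ref{ass:feature-mapping} caps the number of phases by $\log_2(\det V_{T+1}/\det V)=O(m\log T)$, so $\EE{\sum_t(\mathrm{T})_t}=\widetilde{O}(1)$. The term $(\mathrm{M})_t$ is a martingale difference, because the true next state is generated from $\Theta_*$ with independent noise $z_{t+1}$, so $\int h(y,\TTh_t)\,p(dy\mid x_t,a_t,\Theta_*)=\EE{h(x_{t+1},\TTh_t)\mid \tilde{\cF}_t,\TTh_t}$ and $\EE{\sum_t(\mathrm{M})_t}=0$.

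The estimation term $(\mathrm{E})_t$ is the analytic workhorse. Coupling through the shared noise $z$ as in Assumption~\ref{ass:lindyn} and using the $B$-Lipschitzness of Assumption~\ref{ass:grad}, $|(\mathrm{E})_t|\le B\,\E_{z}\!\left[\norm{f(x_t,a_t,\Theta_*,z)-f(x_t,a_t,\TTh_t,z)}\right]\le B\,\norm{\Theta_*-\TTh_t}_{M(x_t,a_t)}$. Writing $\Delta_t=\Theta_*-\TTh_t$ and inserting $V_t^{1/2}$, I would use $\norm{\Delta_t}_{M(x_t,a_t)}^2\le \norm{\Delta_t^\top V_t^{1/2}}^2\,\trace\!\big(V_t^{-1/2}M(x_t,a_t)V_t^{-1/2}\big)$, then Cauchy--Schwarz over $t$ to factor $\sum_t\norm{\Delta_t}_{M(x_t,a_t)}$ into $\big(\sum_t\norm{\Delta_t^\top V_t^{1/2}}^2\big)^{1/2}\big(\sum_t\trace(M(x_t,a_t)V_t^{-1})\big)^{1/2}$. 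The second factor is a deterministic $\sqrt{\widetilde{O}(1)}$ by the elliptical-potential/log-determinant lemma (the spectral bound $M\preceq\Phi^2 I$ from Assumption~\ref{ass:feature-mapping} controls the constant). For the first factor I would exploit the key laziness fact that $\det V_t\le 2\det V_{\tau_t}$ and $V_t\succeq V_{\tau_t}$ force every eigenvalue of $V_{\tau_t}^{-1/2}V_t V_{\tau_t}^{-1/2}$ into $[1,2]$, hence $V_t\preceq 2V_{\tau_t}$ and $\norm{\Delta_t^\top V_t^{1/2}}^2\le 2\norm{\Delta_t^\top V_{\tau_t}^{1/2}}^2$; splitting through $\widehat\Theta_{\tau_t}$ and using that $\TTh_t\sim P_{\tau_t}$ is exactly a posterior draw covered by Assumption~\ref{ass:bnd-var} (with the other estimate being $\Theta_*$) gives $\EE{\norm{\Delta_t^\top V_t^{1/2}}^2}\le 8C$. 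By Jensen this yields $\EE{\sum_t(\mathrm{E})_t}\le B\sqrt{\widetilde{O}(1)}\sqrt{8CT}=\widetilde{O}(\sqrt T)$. Collecting the four pieces gives $\EE{\sum_t \ell(x_t,a_t)}\le \EE{\sum_t J(\TTh_t)}+\widetilde{O}(\sqrt T)+\Sigma_T$, and subtracting the comparator bound reduces the theorem to controlling the posterior-sampling term $(\star)=\EE{\sum_{t=1}^T\big(J(\TTh_t)-J(\Theta_*)\big)}$.

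The step I expect to be the main obstacle is exactly $(\star)$. At each phase start $\tau_k$ the posterior-sampling identity $\EE{J(\TTh_{\tau_k})\mid\tilde{\cF}_{\tau_k}}=\EE{J(\Theta_*)\mid\tilde{\cF}_{\tau_k}}$ holds, so if the phase length $\ell_k$ were $\tilde{\cF}_{\tau_k}$-measurable the term would vanish exactly, as in the fixed-horizon episodic analysis. The trouble is that under the pure determinant-doubling schedule $\ell_k$ is a genuine stopping time that depends on the whole trajectory, and hence on $\TTh_{\tau_k}$ through the visited states, so $\EE{\ell_k J(\TTh_{\tau_k})}\ne\EE{\ell_k J(\Theta_*)}$ and the naive estimate $\ell_k|J(\TTh_{\tau_k})-J(\Theta_*)|$ is only linear in $T$. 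My plan is to (a) use the doubling test to bound the number of phases by $O(m\log T)$, and (b) decouple $\ell_k$ from $\TTh_{\tau_k}$ via an optional-sampling argument on the bounded posterior-mean martingale $g_t=\EE{J(\Theta_*)\mid\tilde{\cF}_t}$, writing $(\star)=\sum_t\EE{g_{\tau_t}-g_t}$ and charging the accumulated discrepancy to the quadratic variation of $g$ rather than its total variation. Assuming $J$ is bounded, this is where the real difficulty lies, and I would anticipate needing to strengthen the phase schedule (so that phase lengths grow by at most one, as is done for continuing-setting posterior sampling) to force $(\star)=\widetilde{O}(\sqrt T)$; this is the single step I would scrutinize most carefully.
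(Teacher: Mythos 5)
Your decomposition reproduces the paper's proof almost line by line: the reduction of the comparator to $T\,\EE{J(\Theta_*)}+O(H)$, the use of the ACOE for $\TTh_t$ together with \eqref{eq:stsuboptimality}, the split of the Bellman error into a within-phase telescoping part (charged $H$ per switch, with the number of switches bounded by $m\log_2(\trace(V)+\Phi^2 T)$ via the doubling test), a martingale part that vanishes in expectation by the tower rule, and an estimation part bounded through the coupling of Assumption~\ref{ass:lindyn} by $B\,\norm{\Theta_*-\TTh_t}_{M(x_t,a_t)}$, followed by Cauchy--Schwarz, the log-determinant (elliptical potential) lemma, the inequality $\norm{\Delta_t^\top V_t^{1/2}}^2\le 2\norm{\Delta_t^\top V_{\tau_t}^{1/2}}^2$ extracted from the laziness test, and Assumption~\ref{ass:bnd-var}. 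All of this matches the paper's bounds on $R_1$ and $R_2$, down to the constants.

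The one step you leave open, $(\star)=\EE{\sum_{t}(J(\TTh_t)-J(\Theta_*))}$, is therefore the genuine gap in the proposal: without it the theorem is not proved. The paper disposes of it in one line and does so \emph{per time step} rather than per phase: for each fixed $t$ it writes $\EE{\ell(x_t,a_t)-J(\Theta_*)}=\EE{\EE{\ell(x_t,a_t)-J(\Theta_*)\mid\cF_{\tau_t}}}$ and replaces $J(\Theta_*)$ by $J(\TTh_t)$ on the grounds that, conditioned on $\cF_{\tau_t}$, the sample $\TTh_t=\TTh_{\tau_t}$ and $\Theta_*$ are identically distributed; no phase-length weight ever appears explicitly, so no optional-sampling machinery is invoked. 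Note, however, that your objection is aimed exactly at this line: summing the per-$t$ identities over a phase is the same as weighting $J(\TTh_{\tau^{(k)}})-J(\Theta_*)$ by the (truncated) phase length, and the event $\{\tau_t=s\}$ is not $\tilde{\cF}_s$-measurable, since whether the determinant test fires between $s+1$ and $t$ depends on states generated under $\TTh_s$. The paper's resolution thus rests on asserting the conditional swap under conditioning on the random, non-stopping time $\tau_t$. Relative to the paper you are missing the closing argument, and the sketch you give (optional sampling on the posterior-mean martingale, or a strengthened schedule) is not carried out, so as written the proposal does not establish $(\star)=\widetilde O(\sqrt T)$ and does not complete the proof.
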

\noindent 
In particular, the theorem implies  that  Lazy PSRL is asymptotically optimal as long as $\sum_{t=1}^T \EE{\sigma_t} = o(T)$
and it is  $O(\epsilon)$-optimal if $\EE{\sigma_t} \le \epsilon$.

\section{Forcefully Stabilized Systems}

For some applications, such as robotics, where the state can grow unbounded,
the boundedness assumption (Assumption~\ref{ass:feature-mapping}) is rather problematic.
\todoc{The example is not that apt given that our assumptions will not be satisfied.}
For such systems,
it is common to use a stabilizing controller $\pistab$ 
that is automatically turned on and is kept on as long as the  state vector is ``large''.
The stabilizing controller, however, is expensive (uses a large amount of energy), 
as it is designed to be robust so that it is guaranteed to drive back the state to the safe region 
for all possible systems under consideration.
Hence a good controller should avoid relying on the stabilizing controller.

In this section, we will replace  Assumption~\ref{ass:feature-mapping} with an assumption that a stabilizing controller is available. We will use this controller to override 
the actions coming from our algorithm 
as soon as  the state leaves the (bounded) safe region $\cR\subset \real^n$ until it returns to it.
The pseudocode of the algorithm is shown in Figure~\ref{alg:Thompson-nonlin-2}.

\begin{figure}[tbh]
\begin{center}
\framebox{\parbox{12cm}{
\begin{algorithmic}
\STATE {\bf Inputs}: $P_0$, the prior distribution of $\Theta_*$, $V$, the safe region $\cR\subset \Real^n$.
\STATE Initialize Lazy PSRL with $P_0$ and $V$, $x_1$.
\FOR{$t=1,2,\dots$}
\IF{$x_t\in \cR$}
\STATE Get action $a_t$ from Lazy PSRL
\ELSE
\STATE Get action $a_t$ from $\pistab$
\ENDIF
\STATE Execute action $a_t$ and observe the new state $x_{t+1}$.
\STATE Feed $a_t$ and $x_{t+1}$ to Lazy PSRL.
\ENDFOR
\end{algorithmic}
}}
\end{center}
\caption{Stabilized Lazy PSRL}
\label{alg:Thompson-nonlin-2}
\end{figure}

We assume that the stabilizing controller is effective in the following sense:
\begin{ass}[Effective Stabilizing Controller]
\label{ass:stbl-cont}
There exists $\Phi>0$ such that the following holds:
Pick any $x\in \cR$, $a\in \cA$ and let $x_1',a_1',x_2',a_2',\ldots$ be the sequence of state-action pairs obtained when
from time step two the Markovian stabilizing controller $\pistab$ is applied to the controlled system whose dynamics is given by $\Theta\in \cS$:
$x_1' = x$, $a_1' = a$, $x_{t+1}' \sim p(\cdot|x_t',a_t',\Theta)$, $a_{t+1}' \sim \pistab(\cdot|x_t')$.
Then, $\EE{ \trace( M(x_t',a_t') ) } \le \Phi^2$ for any $t\ge 1$, where $M: \cX\times \cA \to \PSD(m)$ is the map of Assumption~\ref{ass:lindyn} underlying $\{p(\cdot|x,a,\Theta)\}$.
\end{ass}
The assumption is reasonable as it only requires that the trace of $M(x_t',a_t')$ is bounded \emph{in expectation}.
Thus, large spikes, that no controller may prevent, can exist as long as they happen with a sufficiently low probability.

The next theorem shows that Stabilized Lazy PSRL is near Bayes-optimal for the system $p'$ obtained from $p$
by overwriting the action $a$ by the action $\pistab(x)$ if $x$ is outside of the safe region $\cR\subset \Real^n$:
\begin{align*}
p'(dy|x,a,\Theta) =
\begin{cases}
p(dy|x,a,\Theta), & \text{ if } x\in \cR;\\
p(dy|x,\pistab(x),\Theta), & \text{ otherwise}\,.
\end{cases}
\end{align*}
\begin{thm}
\label{thm:LQR++-stabilized}
Consider a parameterized system with the transition probability kernel family $\{p(\cdot| x,a,\Theta)\}_{\Theta\in \cS}$
and let $\pistab: \cX \to \cA$ be a deterministic Markovian controller.
Let the smooth parameterization assumption~\ref{ass:lindyn} hold for $\{p(\cdot|x,a,\Theta)\}$,
the ACOE solution regularity assumption~\ref{ass:ACOE} hold for $\{p'(\cdot|x,a,\Theta)\}$.
Consider running the Stabilized Lazy PSRL 
algorithm of Figure~\ref{alg:Thompson-nonlin-2} on $p(\cdot|x,a,\Theta_*)$
and let the concentration assumption~\ref{ass:bnd-var} hold along the trajectory obtained.
Then, if in addition Assumption~\ref{ass:stbl-cont} holds then
the regret of Stabilized Lazy PSRL against the Bayesian optimal controller of $\{p'(\cdot|x,a,\Theta)\}_{\Theta}$
with prior $P_0$ and immediate cost $\ell$
satisfies $R_T = \widetilde O\left( \sqrt{T} \right) + \Sigma_T$, where $\Sigma_T = \sum_{t=1}^T \EE{\one{x_t\in \cR} \sigma_t}$ and $\sigma_t$ is the suboptimality of the action computed by Lazy PSRL at time step $t$.
\end{thm}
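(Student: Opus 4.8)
The plan is to prove Theorem~\ref{thm:LQR++-stabilized} by reducing it to Theorem~\ref{thm:LQR++}, exploiting the fact that the modified kernel $p'$ was defined precisely so that running the stabilized algorithm of Figure~\ref{alg:Thompson-nonlin-2} on $p$ is indistinguishable from running plain Lazy PSRL on $p'$. First I would establish this equivalence at the level of the law of the trajectory: whenever $x_t\in\cR$ the algorithm executes the Lazy PSRL action and $p'(\cdot|x_t,a_t,\Theta)=p(\cdot|x_t,a_t,\Theta)$, whereas whenever $x_t\notin\cR$ the action is overwritten by $\pistab(x_t)$, which is exactly the transition $p'$ prescribes \emph{regardless} of the nominal action. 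Hence the process $(x_t,a_t)_t$ generated on $p$ has the same distribution as the one generated by Lazy PSRL on $p'$, and the benchmark in the theorem is the Bayesian optimal controller of $p'$. A useful by-product is that on $p'$ every action is optimal at states $x\notin\cR$ (the dynamics there does not depend on the action), so per-step suboptimality is charged only when $x_t\in\cR$; this is exactly why the residual term is $\Sigma_T=\sum_t\EE{\one{x_t\in\cR}\sigma_t}$.

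Next I would verify that the hypotheses of Theorem~\ref{thm:LQR++} hold for $p'$. Assumption~\ref{ass:ACOE} is assumed for $p'$ and Assumption~\ref{ass:bnd-var} is assumed along the realized trajectory, so these are free. For Assumption~\ref{ass:lindyn} I would exhibit the modified map $M'(x,a)=M(x,a)$ for $x\in\cR$ and $M'(x,a)=M(x,\pistab(x))$ for $x\notin\cR$; the smoothness bound $\EE{\norm{y-y'}}\le\norm{\Theta-\Theta'}_{M'(x,a)}$ for $p'$ then follows directly from the corresponding bound for $p$ evaluated at the (possibly overwritten) action. With these in hand, the regret decomposition of Theorem~\ref{thm:LQR++} transfers verbatim with $M$ replaced by $M'$ and $V_t=V+\sum_{s<t}M'(x_s,a_s)$: the optimal-average-cost terms $J(\TTh_t)$ cancel against $J(\Theta_*)$ in expectation through the posterior-sampling identity $\TTh_t\stackrel{d}{=}\Theta_*$ at the phase-start times (which remain stopping times since the trigger $\det(V_t)>2\det(V_{\mathrm{last}})$ is $\tilde\cF_t$-measurable), and the remaining regret is controlled by $B\sum_t\norm{(\Theta_*-\TTh_t)^\top M'(x_t,a_t)^{1/2}}$ via the Lipschitzness of $h$ in Assumption~\ref{ass:ACOE} together with Assumption~\ref{ass:lindyn}.

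The one genuinely new difficulty — and what I expect to be the main obstacle — is that Assumption~\ref{ass:feature-mapping} guaranteed $\trace(M(x,a))\le\Phi^2$ almost surely, whereas now only the weaker Assumption~\ref{ass:stbl-cont} is available, and the original proof uses the almost-sure trace bound both to bound the number of phases and to bound the elliptical-potential sum $\sum_t\norm{V_t^{-1/2}M'(x_t,a_t)^{1/2}}_2^2\le 2\log(\det(V_{T+1})/\det(V))$ (as in \citet{Abbasi-Yadkori-Szepesvari-2011}), both of which are driven by $\log\det(V_{T+1})$. I would therefore replace the almost-sure bound by a bound on $\EE{\log\det(V_{T+1})}$, and the crux is to show $\sum_{t=1}^T\EE{\trace(M'(x_t,a_t))}=O(T)$. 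When $x_t\in\cR$ the summand is $\le\Phi^2$ deterministically by taking the first step ($t=1$, arbitrary action) in Assumption~\ref{ass:stbl-cont}. When $x_t\notin\cR$ I would decompose the trajectory into excursions outside $\cR$ and apply the strong Markov property at each excursion start $(x_\theta,a_\theta)$ with $x_\theta\in\cR$: during the excursion the realized $\pistab$-trajectory is a prefix of the stabilized trajectory of Assumption~\ref{ass:stbl-cont} started from $(x_\theta,a_\theta)$, so the per-index bound $\EE{\trace(M(x'_s,a'_s))}\le\Phi^2$ controls the expected trace accumulated, and summing over excursions (whose total length is at most $T$) yields $\sum_t\EE{\trace(M'(x_t,a_t))}=O(T)$. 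The delicate point here is bounding the trace accumulated over a full excursion purely from the per-step expectation bound, which is where the effectiveness of $\pistab$ encoded by Assumption~\ref{ass:stbl-cont} is essential. Given this, $\log\det(V_{T+1})\le m\log(\trace(V_{T+1})/m)$ and Jensen's inequality give $\EE{\log\det(V_{T+1})}\le m\log((\trace(V)+O(T))/m)=\widetilde O(1)$. Feeding this expected potential bound back through Cauchy--Schwarz, using $\EE{(\sum_t u_t^2)^{1/2}(\sum_t v_t^2)^{1/2}}\le\sqrt{\EE{\sum_t u_t^2}}\,\sqrt{\EE{\sum_t v_t^2}}$ with $u_t=\norm{(\Theta_*-\TTh_t)^\top V_t^{1/2}}$ and $v_t=\norm{V_t^{-1/2}M'(x_t,a_t)^{1/2}}_2$, and invoking the per-step variance bound $C$ of Assumption~\ref{ass:bnd-var} (together with laziness to relate $\TTh_t$ and $V_t$ to their phase-start values), reproduces the $\widetilde O(\sqrt T)$ rate in expectation; adding the residual suboptimality $\Sigma_T$ gives $R_T=\widetilde O(\sqrt T)+\Sigma_T$.
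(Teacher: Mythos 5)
Your proposal follows essentially the same route as the paper's proof: reduce to Theorem~\ref{thm:LQR++} via the observation that Stabilized Lazy PSRL on $p$ is trajectory-equivalent to Lazy PSRL on $p'$, verify Assumption~\ref{ass:lindyn} for $p'$ with the modified map $M'$, and relax the almost-sure trace bound of Assumption~\ref{ass:feature-mapping} to an in-expectation bound handled by Jensen's inequality applied to $\EE{\log\trace(V+\sum_s M_s)}$. Your excursion/strong-Markov argument simply fills in the step the paper asserts without proof (that Assumption~\ref{ass:stbl-cont} yields $\EE{\trace(M'(x_t,a_t))}\le\Phi^2$ along the realized trajectory), so the two proofs agree in substance.
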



\section{Illustration}
\label{sec:examples}
The purpose of this section is to illustrate the results obtained.
In particular, we will consider applying the results to finite MDPs and linearly parameterized controlled systems.
\todoc{Ideally, this section states two results (as corollaries). The proofs should amount to verifying the conditions of the theorems. I have copied here stuff, but I did not attempt to do this.}

\subsection{Near Bayes-optimal Control in Finite MDPs}

Consider an MDP problem with finite state and action spaces. Let the state space be $\cX=\{1,2,\dots,n\}$ and the action space be $\cA=\{1,2,\dots,d\}$. We represent the state variable by an $n$-dimensional binary vector $x_t$ that has only one non-zero element at the current state. The feature mapping for a state-action pair is an $nd$-dimensional binary vector that, similarly to the state vector, has only one non-zero element indicating a state-action pair. In particular, the feature mapping and the parameter matrix have the following form: 
\[
\mbox{For } 1\le k \le n d, \quad
\phi_k(x,a)=
\begin{cases}
1, & \text{ if }  k = (a-1)n+x\,;\\
0, & \text{otherwise}\,,
\end{cases}
\qquad
\Theta_* = \begin{pmatrix}
  \Theta_{*}^{(1)}\\
  \Theta_{*}^{(2)} \\
  \vdots \\
  \Theta_{*}^{(d)} 
\end{pmatrix}\in\Real_{+}^{d n \times n}\; .
\]
Let $s\in [n]$ be a state and $a\in [d]$ be an action. The $s$th row of matrix $\Theta_{*}^{(a)}$ is a distribution over the state space that shows the transition probabilities when we take action $a$ in state $s$. Thus, any row of $\Theta_{*}^{(a)}$ sums to one and $\EE{x_{t+1} | x_t,a_t} = \Theta_*^\top \phi (x_t,a_t)$. An appropriate prior for each row is a Dirichlet distribution. Let $\alpha_1,\dots,\alpha_n$ be positive numbers and let $V' = \mathrm{diag}(\alpha_1, \ldots, \alpha_n)$. Then $V = \mathrm{diag}(V',\ldots,V') \in \real^{nd \times nd}$ be our ``smoother''.

 Let the prior for the $s$th row of $\Theta_{*}^{(a)}$ be the Dirichlet distribution with parameters $(\alpha_1,\ldots,\alpha_n)$:
 $(P_0)_{s,:}=D(\alpha_1, \dots, \alpha_n)$.  
 At time $t$, the posterior has the form $(P_t)_{s,:}=D(\alpha_1+c_t(s,a,1), \dots, \alpha_n+c_t(s,a,n))$, where $c_t(s,a,s')$ is the number of observed transitions to state $s'$ after taking action $a$ in state $s$ during the first $t$ time steps. Matrix $V_t$ is a diagonal matrix with diagonal elements showing the number of times a state-action pair is observed. In particular, $(V_{t})_{n(a-1)+s,n(a-1)+s} = \sum_{s'}\, (\alpha_{s'} + c_t(s,a,s') )$. Vector $\widehat \Theta_{t,(:,s')}$ is an $nd$-dimensional vector and its elements show the empirical frequency of transition to state $s'$ from different state-action pairs. The mean of distribution $(P_t)_{s,:}$ is vector $\widehat \Theta_{t,(n(a-1)+s,:)}$ and we have $\widehat \Theta_{t,(n(a-1)+s,s')} = \alpha_{s'}+c_t(s,a,s')/\sum_{s''} (\alpha_{s''}+c_t(s,a,s''))$. 
 
The next corollary shows the performance of Lazy PSRL when applied to a finite MDP problem. The proof is in Appendix~\ref{app:proofs}.
\begin{cor}
\label{cor:LQR++-finite}
Consider Lazy PSRL applied to a finite MDP problem with $n$ states, $d$ actions, and a Dirichlet prior as specified above. Suppose in time step $t$, the action chosen is $\sigma_t$-suboptimal. Then, for any time $T$, the regret of Lazy PSRL satisfies $R_T = \widetilde O\left( \sqrt{T} \right) + \Sigma_T$. 
\end{cor}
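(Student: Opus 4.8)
The plan is to obtain Corollary~\ref{cor:LQR++-finite} as a direct consequence of Theorem~\ref{thm:LQR++}: the entire work is to check that the finite-MDP model with the Dirichlet prior described above meets Assumptions~\ref{ass:lindyn}--\ref{ass:feature-mapping}, after which the advertised bound $R_T = \widetilde O(\sqrt T) + \Sigma_T$ follows verbatim. The laziness schedule (the $\det(V_t) > 2\det(V_{\mathrm{last}})$ test) and the handling of the $\sigma_t$-suboptimal actions are inherited from the generic algorithm and need no model-specific argument, so I would only address the four structural assumptions.

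For Assumption~\ref{ass:lindyn} I would realize the dynamics by inverse-CDF sampling: write $f(x,a,\Theta,z)$ as the one-hot encoding of the next state obtained by feeding the shared uniform $z$ into the inverse CDF of the categorical row $p := (\Theta^{(a)})_{s,:}$ selected by $\phi(x,a)$. With $y = f(x,a,\Theta,z)$ and $y' = f(x,a,\Theta',z)$ both one-hot, $\norm{y-y'} = \sqrt2\,\one{y\neq y'}$, and under the maximal coupling $\EE{\norm{y-y'}} = \sqrt2\,\mathrm{TV}(p,p') = \tfrac1{\sqrt2}\norm{p-p'}_1$. Choosing $M(x,a) = \tfrac{n}{2}\,\phi(x,a)\phi(x,a)^\top$ gives $\norm{\Theta-\Theta'}_{M(x,a)}^2 = \tfrac n2 \norm{(\Theta-\Theta')^\top\phi(x,a)}^2 = \tfrac n2\norm{p-p'}_2^2$, so Cauchy--Schwarz ($\norm{p-p'}_1 \le \sqrt n\,\norm{p-p'}_2$) yields exactly $\EE{\norm{y-y'}} \le \norm{\Theta-\Theta'}_{M(x,a)}$. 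Since $\trace(M(x,a)) = n/2$, Assumption~\ref{ass:feature-mapping} holds with $\Phi^2 = n/2$, and $V_t$ becomes a diagonal matrix of regularized visit counts, matching the description in the text up to the scalar $n/2$.

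The substantive step is Assumption~\ref{ass:bnd-var}, which I would verify by taking $\widehat\Theta_t$ to be the posterior (Dirichlet) mean. Bounding the spectral norm by the Frobenius norm decouples the rows, giving $\norm{(\Theta-\widehat\Theta_t)^\top V_t^{1/2}}^2 \le \sum_{(s,a)} (V_t)_{kk}\,\norm{(\Theta^{(a)})_{s,:} - (\widehat\Theta_t^{(a)})_{s,:}}^2$ with $(V_t)_{kk} = \tfrac n2(\alpha_0 + N_t(s,a))$, where $\alpha_0 = \sum_i\alpha_i$ and $N_t(s,a)$ is the visit count. For $\Theta_t'\sim P_t$ the conditional expectation of each row term is the sum of the coordinate variances of a $\mathrm{Dirichlet}$ with total mass $\alpha_0 + N_t(s,a)$, which is at most $1/(\alpha_0 + N_t(s,a))$; the count factor cancels this decay and summing over the $nd$ pairs gives the bound $C = \tfrac n2\cdot nd$, uniformly in $t$. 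The same bound holds for $\Theta_*$ because, conditioned on $\tilde{\cF}_t$, the Bayesian posterior of $\Theta_*$ is exactly $P_t$, so both cases $\Theta\in\{\Theta_t',\Theta_*\}$ reduce to the identical Dirichlet-variance computation, which is what delivers the almost-sure statement.

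For Assumption~\ref{ass:ACOE} I would invoke the standard theory for finite MDPs: assuming the family is (uniformly) communicating over $\Theta\in\cS$ --- the usual condition under which the ACOE~\eqref{eq:ACOE1} admits a solution with span bounded uniformly in $\Theta$ --- one gets a bounded bias function $h(\cdot,\Theta)$; since the state vectors are one-hot (hence of bounded, finite diameter) any such $h$ is automatically Lipschitz, and Proposition~\ref{lem:rangevaluefun} normalizes its range into $[0,H]$ with $H,B$ uniform in $\Theta$. With all four assumptions in place Theorem~\ref{thm:LQR++} applies and yields the claim, the hidden constant now depending on $n,d,V$ and $\alpha_0$. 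The main obstacle is Assumption~\ref{ass:bnd-var}: one must confirm that the $V_t$-weighting precisely offsets the $1/N$ posterior contraction of the Dirichlet marginals and that this holds almost surely and uniformly over $t$ for both the true parameter and a fresh posterior sample; the spectral-versus-Frobenius step and the row-decoupling afforded by the block structure of $V_t$ are what make this tractable.
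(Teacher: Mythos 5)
Your proposal follows essentially the same route as the paper: reduce to Theorem~\ref{thm:LQR++} by checking the assumptions, with the substantive step --- Assumption~\ref{ass:bnd-var} via the Frobenius-norm bound, row decoupling, and the observation that the diagonal entry of $V_t$ exactly cancels the $1/(\alpha_0+N_t(s,a))$ decay of the Dirichlet coordinate variances --- being the identical computation to the paper's (the paper gets $C=n^2d$; your scaling of $M$ changes only the constant). Your explicit remark that the case $\Theta=\Theta_*$ reduces to the case $\Theta=\Theta_t'$ because the posterior of $\Theta_*$ given $\tilde\cF_t$ is $P_t$ is correct and is only implicit in the paper, as is your treatment of Assumption~\ref{ass:ACOE} via communicating finite MDPs, which the paper skips entirely.

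The one genuine flaw is in your verification of Assumption~\ref{ass:lindyn}: inverse-CDF sampling with a shared uniform $z$ is the \emph{quantile} coupling, not the maximal coupling, so for $n\ge 3$ it is not true that $\Prob{y\neq y'}=\mathrm{TV}(p,p')$ (take $p=(\tfrac12,\tfrac12,0)$, $p'=(0,\tfrac12,\tfrac12)$: the quantile coupling gives disagreement probability $1$ while $\mathrm{TV}=\tfrac12$); moreover no single map $f(\cdot,z)$ can realize the maximal coupling simultaneously for all pairs $(\Theta,\Theta')$. The step is repairable: for the quantile coupling one has $\Prob{y\neq y'}\le\sum_{k=1}^{n-1}\abs{F_p(k)-F_{p'}(k)}\le (n-1)\norm{p-p'}_1$, so inflating $M(x,a)$ by an extra factor polynomial in $n$ restores the inequality $\EE{\norm{y-y'}}\le\norm{\Theta-\Theta'}_{M(x,a)}$ and only worsens $\Phi^2$ and hence the logarithmic constants. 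For what it is worth, your attempt is more faithful to the literal form of Assumption~\ref{ass:lindyn} than the paper's own verification, which computes $\Prob{y=y'}=\Theta_{(x,a),:}^\top\Theta_{(x,a),:}'$ (an independent coupling, incompatible with a shared $z$) and ends with a bound on $\EE{\norm{y-y'}}$ by a \emph{squared} norm rather than the first power the assumption requires.
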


\subsection{Linearly Parametrized Problems with Gaussian Noise}
\label{sec:gaussian}

Next, we consider linearly parametrized problems with Gaussian noise:
\beq
\label{eq:nonlinear-dynamics}
x_{t+1} = \Theta_*^\top \phi(x_t, a_t) + w_{t+1} \,,
\eeq
where $w_{t+1}$ is a zero-mean normal random variable. The nonlinear dynamics shown in \eqref{eq:nonlinear-dynamics} shares similarities to, but allows significantly generality than the Linear Quadratic (LQ) problem considered by~\citet{Abbasi-Yadkori-Szepesvari-2011}. 
In particular, in the LQ problem,
$
\Theta_*\ttop =
\begin{pmatrix}
A_*\,,  B_*
\end{pmatrix}
$
and 
$
\phi(x_t,a_t)^\top = 
\begin{pmatrix}
x_t^\top\,, a_t^\top
\end{pmatrix}
$. 
Further, \citet{Abbasi-Yadkori-Szepesvari-2011} assume that the noise is subgaussian. 


Next, we describe a conjugate prior under the assumption that the noise is Gaussian. Without loss of generality, we assume that $\EE{w_{t+1} w_{t+1}^\top\ |\ \cF_t} = I$. A conjugate prior is appealing 
as the posterior has a compact representation that allows for computationally efficient sampling methods. \todoc[size=\tiny]{There is a bit of a problem when we use a stabilizing controller, or not?}
Assume that the columns of matrix $\Theta_*$ are independently sampled from the following prior: 
$
\mbox{for } i=1\dots n: \quad P_0\left(\Theta_{*,(:,i)}\right) \propto  \exp\left( \Theta_{*,(:,i)}^\top V \Theta_{*,(:,i)} \right) \one{\Theta_{*,(:,i)}\in \cS} 
$. 
Then, by Bayes' rule, the posterior for column $i$ of $\Theta_*$ has the form of 
\[
P_t\left(\Theta_{*,(:,i)} \right) \propto \exp\left( -\frac{1}{2} \left(\Theta_{*,(:,i)} - \widehat \Theta_{t,(:,i)} \right)^\top V_t \left(\Theta_{*,(:,i)} - \widehat \Theta_{t,(:,i)} \right) \right) \one{\Theta_{*,(:,i)}\in \cS} \; .
\]
The next corollary shows the performance of Lazy PSRL when applied to linearly parametrized problems with Gaussian noise. We assume an effective stabilizing controller is available. The proof is in Appendix~\ref{app:proofs}.
\begin{cor}
\label{cor:LQR++-linearly}
Consider Stabilized Lazy PSRL applied to a linearly parametrized problem with Gaussian noise. Suppose in time step $t$, the action chosen is $\sigma_t$-suboptimal. Then, for any time $T$, the regret of Stabilized Lazy PSRL satisfies $R_T = \widetilde O\left( \sqrt{T} \right) + \Sigma_T$. 
\end{cor}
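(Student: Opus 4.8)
The final statement is Corollary~\ref{cor:LQR++-linearly}: Stabilized Lazy PSRL applied to a linearly parametrized problem with Gaussian noise has regret $R_T = \widetilde O(\sqrt T) + \Sigma_T$.

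The proof strategy is to verify the hypotheses of Theorem~\ref{thm:LQR++-stabilized}. Let me think through what needs to be checked.

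The setup: $x_{t+1} = \Theta_*^\top \phi(x_t, a_t) + w_{t+1}$, with $w_{t+1}$ zero-mean Gaussian, normalized to covariance $I$. There's a conjugate Gaussian prior (restricted to $\cS$), a stabilizing controller available, and we want to apply the stabilized version of the theorem.

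The hypotheses of Theorem~\ref{thm:LQR++-stabilized} that need verification:
1. **Assumption~\ref{ass:lindyn}** (Smoothly Parameterized Dynamics) for $\{p(\cdot|x,a,\Theta)\}$ — need to exhibit the map $M$ and the coupling bound.
2. **Assumption~\ref{ass:ACOE}** (Regular ACOE solutions) for the stabilized kernel $p'$.
3. **Assumption~\ref{ass:bnd-var}** (Concentrating Posterior) along the trajectory.
4. **Assumption~\ref{ass:stbl-cont}** (Effective Stabilizing Controller).

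Let me think about each.

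**Assumption~\ref{ass:lindyn}**: Since the noise enters additively and independent of $\Theta_*$, the natural coupling uses the same noise. If $z \sim U[0,1]$ generates $w$ via some fixed transform (the inverse CDF of the Gaussian), then with the same $z$: $y = \Theta^\top \phi(x,a) + w$ and $y' = \Theta'^\top\phi(x,a) + w$, so $y - y' = (\Theta - \Theta')^\top \phi(x,a)$. Then $\E\|y-y'\| = \|(\Theta-\Theta')^\top\phi(x,a)\|$. Wait — actually $y - y'$ is deterministic given $z$ here (the noise cancels!), so $\E\|y-y'\| = \|(\Theta-\Theta')^\top \phi(x,a)\|$. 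We need this $\le \|\Theta-\Theta'\|_{M(x,a)}$. Recall $\|\Theta\|_M^2 = \|\Theta^\top M \Theta\|_2$. Taking $M(x,a) = \phi(x,a)\phi(x,a)^\top$ gives $\|\Theta-\Theta'\|_{M}^2 = \|(\Theta-\Theta')^\top \phi\phi^\top (\Theta-\Theta')\|_2 = \|(\Theta-\Theta')^\top\phi\|^2$. So equality holds.

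**Assumption~\ref{ass:stbl-cont}** (equivalently boundedness if working in the bounded-state regime): With $M(x,a) = \phi(x,a)\phi(x,a)^\top$, $\trace(M(x,a)) = \|\phi(x,a)\|^2$. So I need $\E\|\phi(x_t', a_t')\|^2 \le \Phi^2$ along the stabilized trajectory. For the LQ case $\phi = (x^\top, a^\top)^\top$, so this is $\E[\|x_t'\|^2 + \|a_t'\|^2]$ bounded — this is exactly the kind of second-moment bound a good stabilizing controller delivers. This is the step I'd assume is provided by the stabilizing-controller design, and it's where the "effective stabilizing controller" assumption does its work.

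**Assumption~\ref{ass:bnd-var}**: Here I take $\widehat\Theta_t$ to be the posterior mean from the displayed posterior. The posterior for each column is Gaussian with precision $V_t$ (truncated to $\cS$). For a Gaussian column, $\E[\|(\Theta_{:,i} - \widehat\Theta_{t,:,i})^\top V_t^{1/2}\|^2]$ relates to $\trace(V_t^{1/2} V_t^{-1} V_t^{1/2}) = \trace(I) = $ dimension, giving a bound independent of $t$. Summing over the $n$ columns gives a constant $C$ depending on $m, n$. The truncation to $\cS$ only helps (it cannot increase the relevant spread if $\cS$ is convex, though some care is needed). The same bound applies to $\Theta_*$ since its posterior deviation has the same covariance structure. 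This requires the Gaussian second-moment computation plus the argument that truncation does not hurt.

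**Assumption~\ref{ass:ACOE}** for $p'$ I would simply assume holds — it's a regularity condition on the optimal control problem for the stabilized dynamics, and the excerpt itself defers to Arapostathis-Borkar et al. for sufficient conditions; I'd state it as an inherited hypothesis.

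**The main obstacle** is Assumption~\ref{ass:bnd-var}, the concentrating-posterior bound, because it is the one genuinely quantitative step rather than a structural verification. The delicate point is controlling the second moment of the \emph{truncated} Gaussian posterior uniformly in $t$ — I must argue that restricting the Gaussian to $\cS$ does not inflate $\E[\|(\Theta'_t - \widehat\Theta_t)^\top V_t^{1/2}\|^2]$ beyond the untruncated value (via convexity of $\cS$ and a log-concavity / contraction argument for truncated log-concave measures), and simultaneously handle the $\Theta = \Theta_*$ case, where $\Theta_*$ is the true draw rather than a posterior sample. Once all four assumptions are in hand, the corollary follows immediately by invoking Theorem~\ref{thm:LQR++-stabilized}, with $\Sigma_T = \sum_t \E[\one{x_t \in \cR}\sigma_t]$ as stated.
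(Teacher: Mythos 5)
Your proposal matches the paper's own proof in all essentials: both verify Assumption~\ref{ass:lindyn} with $M(x,a)=\phi(x,a)\phi(x,a)^\top$ (the additive noise cancels under the common coupling, giving $\norm{y-y'}=\norm{(\Theta-\Theta')^\top\phi(x,a)}$ exactly), both verify Assumption~\ref{ass:bnd-var} by bounding the second moment of the whitened Gaussian posterior column by column and summing over the $n$ columns via the Frobenius norm, and both defer the ACOE and stabilizing-controller conditions to the hypotheses of Theorem~\ref{thm:LQR++-stabilized}. The only differences are minor: the paper integrates a coordinate-wise tail bound to get the cruder constant $2nm^3$ where you use the trace identity, and the paper silently drops the truncation of the posterior to $\cS$, an issue you rightly flag as requiring a (short) log-concavity or convexity argument.
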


\if0

\subsubsection{Resource Management Problems}

Resource management problems, 
often studied in operations research also fit the framework
(see, e.g., \citet{Shapiro-Dentcheva-Ruszczynski-2009}).
In some of these problems, the state has an uncontrolled and a controlled part (the latter corresponding to the resources managed):
$x_t^\top = (x_{t,1}^\top,x_{t,2}^\top)$, where $(x_{t,1})_t$ is the uncontrolled Markov chain that describes the evolution of exogenous signals and $x_{t+1,2} = A_* x_t + B_* a_t + w_{t+1}$ with $\EE{ w_{t+1} | x_{t},a_t} = 0$.
Further assuming that 
$\EE{ x_{t+1,1} | x_t,a_t } = \EE{ x_{t+1,1} | x_{t,1} } = P x_{t,1}$, since we also have $\EE{ x_{t+1,2} | x_t,a_t } = A_* x_{t} + B_* a_t$,  
choosing $\phi(x,a)^\top = (x_1^\top, x_2^\top, a^\top)$, we find that 
the evolution of the state satisfies~\eqref{eq:nonlinear-dynamics} with 
\[
\Theta_* = 
\left(\begin{array}{c|c|c}
P & 0 & 0 \\ \hline
\multicolumn{2}{c|}{A_*} & B_* \\
\end{array}\right)\,.
\]
In this class of problems, 
$\cA(x)$, the set of admissible actions in state $x$,  is a compact convex set and 
$\ell$ is  linear: $\ell(x_t,a_t) = c_1^\top x_t + c_2^\top a_t$.
In this case, we leave the specification of conjugate priors (or efficient sampling methods) for future work.
\fi

\if0
\begin{proof}
The analysis is similar to the analysis in the previous section. We only need to show that Assumption~\ref{ass:bnd-var} is satisfied. 

\if0
\underline{Boundedness}: Assumption~\ref{ass:poly} implies that for any $t$, $\norm{x_t} \le 
C'(1+X^k)  \norm{\Theta}  + \norm{w_t}$. 
Introducing $\bar X = C'(1+X^k) $, by the Gaussianity assumption,\footnote{For any non-negative integer $i$ and a zero-mean normally distributed variable $Z$ with standard deviation $\sigma$, 
\[ 
\EE{\abs{Z}^i} =  \sigma^i (i-1)!! .  \left\{ 
  \begin{array}{l l}
    \sqrt{2/i} & \quad \text{if $i$ is odd}\\
    1 & \quad \text{if $i$ is even}
  \end{array} \right.
  \]
We use $i !!$ to denote the odd factorial of $i$, i.e. the product of all odd integers up to $i$.    
  } 
\todoc[size=\tiny]{Where is the Gaussianity assumption stated? Refer to it}
\todoy[size=\tiny]{A footnote is added. A sentence is also added. No typos!} \todoc[size=\tiny]{Footnote is fine (is there a typo?), but did not answer my question.}
we get that for any integer $i$,
\begin{align*}
\EE{\norm{x_t}^i} &\le 2^i {\bar X}^{ i} \EE{\norm{\Theta_*}^i} + 2^i \EE{\norm{w_t}^i} \\
&\le 2^i {\bar X}^{i} \EE{\norm{\Theta_*}_F^{i}} + 2^i \EE{\norm{w_t}^i} \\
&\le 2^i {\bar X}^{i} \sqrt{ (n m)^{i+1} (2i)! } + 2^i \sqrt{n^{i+1} (2i)!}  = : H_i \; .
\end{align*}
\fi

\underline{Regret Decomposition}: \todoc{I am pretty sure this is not needed.}
Let $X_t = \max_{s\le t}\norm{x_s}$ and $\Phi_T = \phi(x_t,a_t)$. By an argument similar to the one in the previous section and using the same notations, we get that  
\begin{align*}
R(T) &\le \Sigma_T + B \sum_{t=1}^T \EE{\one{A_t} X_t} + B \sum_{t=1}^T \EE{\norm{\epsilon_t}} \\
&\le \Sigma_T + m B \EE{X_T \log(1 + T \Phi_T^2)} + B \sqrt{\EE{G_T} C' T} \,,
\end{align*}
where $G_T =  2 m \max(1,\Phi_T^2) \log (1+ T \Phi_T^2/m )$. Using $\log(1+x) \le x$ for positive $x$, we get that
\begin{align*}
R_T &\le \Sigma_T + m B \log(T) \EE{X_T} + 2 m B \EE{X_T^{k+1}} \\
&\quad\quad + B \sqrt{2 m C' T \left(\log(T/m) \EE{X_T^{2k}} + 2 \EE{X_T^{3k}} \right) } \\
&\le \Sigma_T + m B \log(T) H(1) + 2 m B H(k+1) + B \sqrt{2 m C' T (H(2k) \log(T/m) + H(3k) )} \; .
\end{align*}

\end{proof}
\fi

\if0
\subsection{Our Contributions}



We propose an algorithm, called Lazy Thompson Control, inspired by Thompson sampling and prove a sublinear bound on the regret. Given the posterior distribution over parameter matrix $\Theta_*$, Thompson sampling samples from the posterior and computes a policy that is optimal with respect to the sampled parameter \citep{Thompson-1933, Strens-2000, Vlassis-Ghavamzadeh-Mannor-Poupart-2012}. 
We propose a new variant where the novelty is that once a policy is chosen, it is frozen for an appropriate number of time steps, unlike existing Thompson sampling methods that solve for a policy in each round. 
In fact, there are simple counterexamples that show such implementations of Thompson sampling are very inefficient \citep{Wang-Lizotte-Bowling-Schuurmans-2005}. 

Here, we assume that we have access to a computationally efficient algorithm that can compute an approximately optimal policy corresponding to any given parameter matrix.
Although this problem on its own may be challenging in some cases, but we view the problem of studying this as orthogonal to the subject of the current paper.

\subsection{Related work}
Our paper was largely inspired by the work of  \citet{Osband-Russo-VanRoy-2013} who study \emph{finite horizon} MDPs in a Bayesian setting and obtain sublinear regret bounds.
Since they considered only the finite horizon setting and keep the policy fixed in an episode, \citet{Osband-Russo-VanRoy-2013} avoid the issues when arise when the policies are changed too often. 
Our setting is significantly different from this setting as we consider MDPs with possibly infinite state and action spaces with no episodic structure and an average loss criterion.

Approximate computation of the Bayesian policy was investigated by \citet{Kolter-Ng-2009} in the case when the optimality criterion is to maximize the expected total discounted sum of rewards. Let $x_t$ and $b_t$ be the state and \textit{belief state} at time $t$. Let $V_{\pi}$ and $V_*$ be the value of policy $\pi$ and the optimal policy as defined in \citet{Kolter-Ng-2009}.  \citet{Kolter-Ng-2009} propose an algorithm and prove that with high probability, $V_{\pi_t}(x_t,b_t)$ is close to $V_*(x_t,b_t)$ for all but a small number of time steps. Similar results are obtained by \citet{Asmuth-Li-Littman-Nouri-Wingate-2009}. 

One major limitation of the above result is that the performance of the optimal policy is measured only along the trajectory followed by the learner. It is more appropriate to compare the total loss of the learner and that of the optimal policy had the optimal policy followed its own trajectory. In this paper, we provide such performance guarantees. Moreover, we study a more general problem with large state and action spaces and an average loss criterion.

Regret bounds for MDP problems are also obtained in non-Bayesian settings. 
\citet{Burnetas-Katehakis-1997}, \citet{Jaksch-Ortner-Auer-2010} and \citet{Bartlett-Tewari-2009} propose efficient algorithms for finite MDP
problems with stochastic transition and loss functions.  These results are
extended to MDPs with large state and action spaces
in~\citep{Abbasi-Yadkori-Szepesvari-2011, Abbasi-Yadkori-2012,
Ortner-Ryabko-2012}. \citet{Abbasi-Yadkori-Szepesvari-2011} and \citet{Abbasi-Yadkori-2012}
give algorithms with $O(\sqrt{T})$ regret for linearly parameterized MDP
problems, while~\citet{Ortner-Ryabko-2012} gives $O(T^{(2n+1)/(2n+2)})$ regret
bounds under a Lipschitz assumption, where $n$ is the dimensionality of the
state space.  We note that these algorithms are still computationally expensive, 
even if an efficient planning method is available. 
\fi

\section{Experiments}

The purpose of this section is to illustrate the behavior of Lazy PSRL on a simple control problem.
As the control problem, we choose a web server control problem.
This control problem is described first, which will be followed by the description of our results.

\subsubsection{Web Server Control Application}
Next, we illustrate the behavior of Lazy PSRL on a web server control problem.
The problem is taken from~Section 7.8.1 of the book by \citet{HeDiPaTi04} (this example is also used in Section 3.4 of the book by \citet{Astrom-Murray-2008}). 
An Apache HTTP web server processes the incoming connections that arrive on a queue. Each connection is assigned to an available process. A process drops the connection if no requests have been received in the last \textsc{KeepAlive} seconds. At any given time, there are at most \textsc{MaxClients} active processes. The values of the \textsc{KeepAlive} and \textsc{MaxClients} parameters, denoted by $a_{ka}$ and $a_{mc}$ respectively, are chosen by a control algorithm. Increasing $a_{mc}$ and $a_{ka}$ results in faster and longer services to the connections, but also increases the CPU and memory usage of the server. \textsc{MaxClients} is bounded in $[1,20]$, while \textsc{KeepAlive} is bounded in $[1,1024]$. The state of the server is determined by the average processor load $x_{cpu}\in [0,1]$ and the relative memory usage $x_{mem}\in [0,1]$. A  {\em operating point of interest} of the system is given by $x_{cpu} = 0.58\,, a_{ka} = 11 s\,, x_{mem} = 0.55\,, a_{mc} = 600$. A linear model around the operating point is assumed, resulting in a model of the form
\[
\begin{pmatrix}
  x_{cpu}(t+1)\\
  x_{mem}(t+1) 
\end{pmatrix} = 
\begin{pmatrix}
  A_{11} & A_{12}\\
  A_{21} & A_{21}
\end{pmatrix}\,
\begin{pmatrix}
  x_{cpu}(t)\\
  x_{mem}(t) 
\end{pmatrix} + 
\begin{pmatrix}
  B_{11} & B_{12}\\
  B_{21} & B_{21}
\end{pmatrix}\,
\begin{pmatrix}
  a_{ka}(t)\\
  a_{mc}(t) 
\end{pmatrix} +
\begin{pmatrix}
  w_1(t+1)\\
  w_2(t+1) 
\end{pmatrix}
\,,
\]
where $(w_1(t+1),w_2(t+1))_t$ is an i.i.d. sequence of Gaussian random variables, with a diagonal covariance matrix $\EE{w(t+1)^\top w(t+1)} = \sigma^2 I$. We test $\sigma=0.1$ and $\sigma=1.0$ in our experiments. 
Note that these state and action variables are in fact the deviations from the operating point. 
\citet{HeDiPaTi04} fitted this model to an Apache HTTP server and obtained the parameters
\[
A = \begin{pmatrix}
  0.54 & -0.11\\
  -0.026 & 0.63
\end{pmatrix},\qquad
B = \begin{pmatrix}
  -85 & 4.4\\
  -2.5 & 2.8
\end{pmatrix}\times 10^{-4}\;,
\]
while the noise standard deviation was measured to be $0.1$.
\citet{HeDiPaTi04} found that these parameters provided a reasonable fit to their data.

For control purpose, the following cost matrices were chosen (cf. Example~6.9 of \citet{Astrom-Murray-2008}):
\[
Q = \begin{pmatrix}
  5 & 0\\
  0 & 1
\end{pmatrix},\qquad
R = \begin{pmatrix}
  1/50^2 & 0\\
  0 & 0.1^6
\end{pmatrix} \;.
\]
 
\subsubsection{Numerical Results}

We compare the \textsc{Lazy PSRL} algorithm with the \textsc{OFULQ} algorithm~\citep{Abbasi-Yadkori-2012} on this problem. For the \textsc{Lazy PSRL} algorithm, we use standard normal distribution as prior. The \textsc{OFULQ} algorithm is an optimistic algorithm that maintains a confidence ellipsoid around the unknown parameter and, in each round, finds the parameter and the corresponding policy that attains the smallest average loss. Specifically, the algorithm solves optimization problem $(\widetilde A, \widetilde B) = \argmin_{A,B} J(A,B)$, where $J(A,B)$ is the average loss of the optimal policy when system dynamics is $(A,B)$.  Then, the algorithm plays action $a_t = - K(\widetilde A, \widetilde B) x_t$, where $K$ is the \textit{gain matrix}. The objective function $J$ is not convex and thus, solving the optimistic optimization can be very time consuming. As we show next, the \textsc{Lazy PSRL} algorithm can have lower regret while avoiding the high computational costs of the \textsc{OFULQ} algorithm.  

The time horizon in these experiments is $T=1,000$. We repeat each experiment $10$ times and report the mean and the standard deviation of the observations. 
Figure~\ref{figure:exps} shows regret vs. computation time. The horizontal axis shows the amount of time (in seconds) that the algorithm spends to process $T=1,000$ rounds. We change the computation time by changing how frequent an algorithm updates its policy. 
Details of the implementation of the \textsc{OFULQ} algorithm are in~\citep{Abbasi-Yadkori-2012}. 

The top-left and right subfigures of Figure~\ref{figure:exps} show the regret of the algorithms when the standard deviation of the noise is $\sigma=0.1$. The regret of the \textsc{Lazy PSRL} algorithm is slightly worse than what we get for the \textsc{OFULQ} algorithm in this case. The \textsc{Lazy PSRL} algorithm outperforms the \textsc{OFULQ} algorithm when the noise variance is larger (bottom subfigures).  
We explain this observation by noting that a larger noise variance implies larger confidence ellipsoids, which results in more difficult OFU optimization problems. Finally, we performed experiments with different prior distributions. Figure~\ref{figure:prior} shows regret of the \textsc{Lazy PSRL} algorithm when we change the prior. 

\begin{figure}
\centering%
\begin{tabular}{c}
{\includegraphics[width=.35\textwidth]{./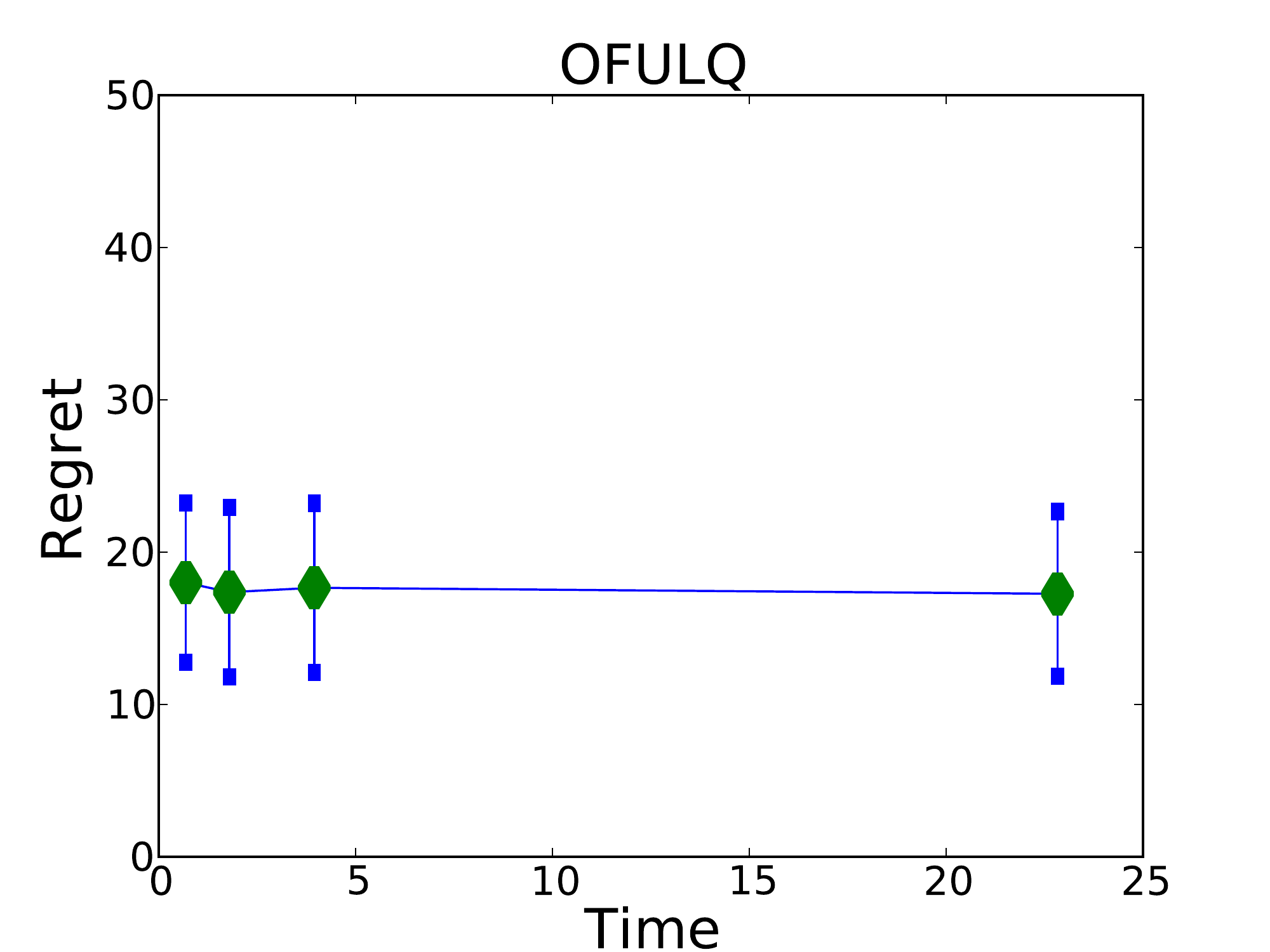}} \hspace{5mm}
{\includegraphics[width=.35\textwidth]{./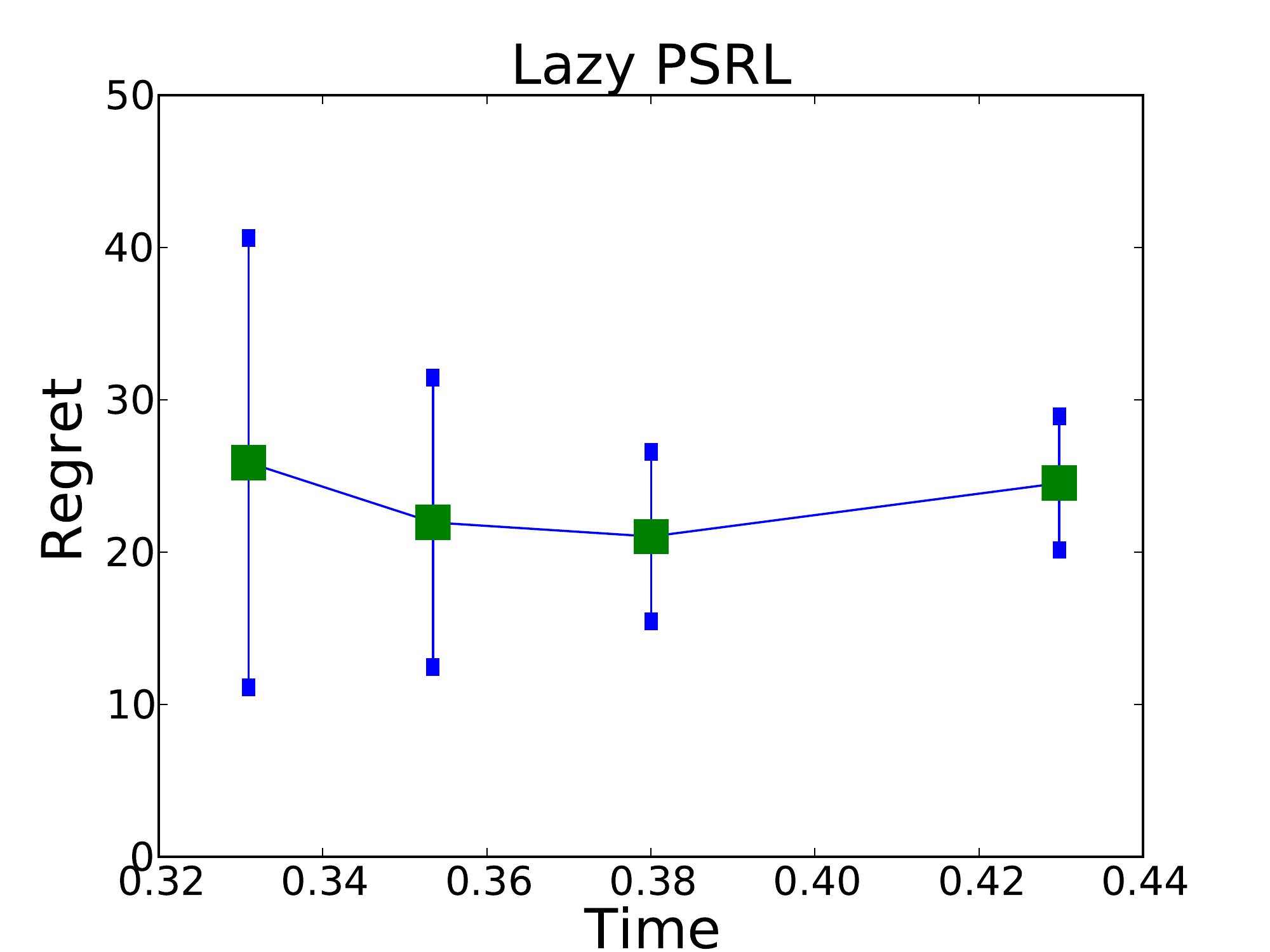}} \\
{\includegraphics[width=.35\textwidth]{./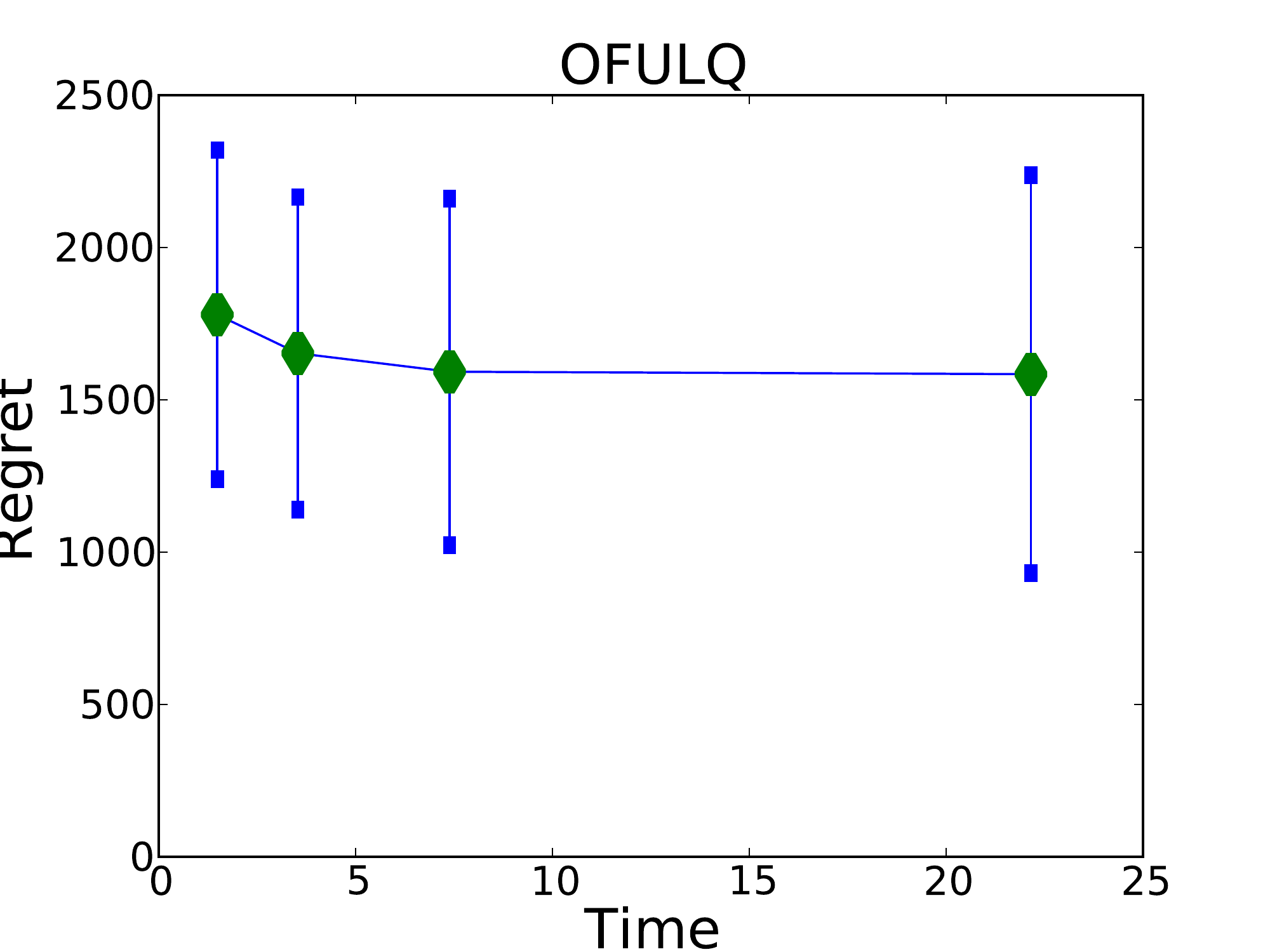}} \hspace{5mm}
{\includegraphics[width=.35\textwidth]{./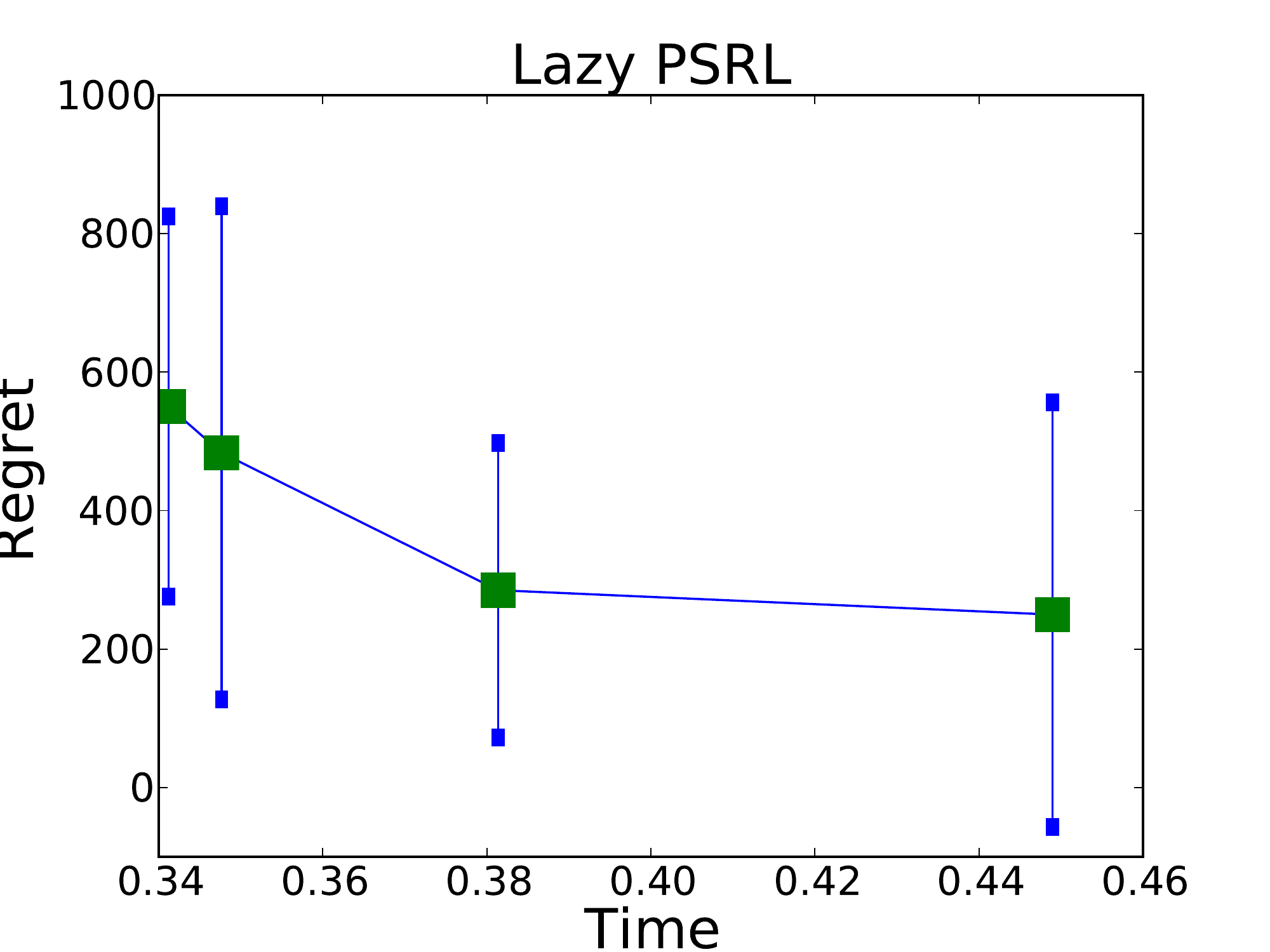}}
\end{tabular}
\caption{Regret vs time for a web server control problem. (Top-left) regret of the \textsc{OFULQ} algorithm when $\sigma = 0.1$. (Top-right): regret of the \textsc{Lazy PSRL} algorithm when $\sigma = 0.1$. (Bottom-left) regret of the \textsc{OFULQ} algorithm when $\sigma = 1.0$. (Bottom-right): regret of the \textsc{Lazy PSRL} algorithm when $\sigma = 1.0$.}
\label{figure:exps}
\end{figure}

\begin{figure}
\centering%
\begin{tabular}{c}
{\includegraphics[width=.35\textwidth]{./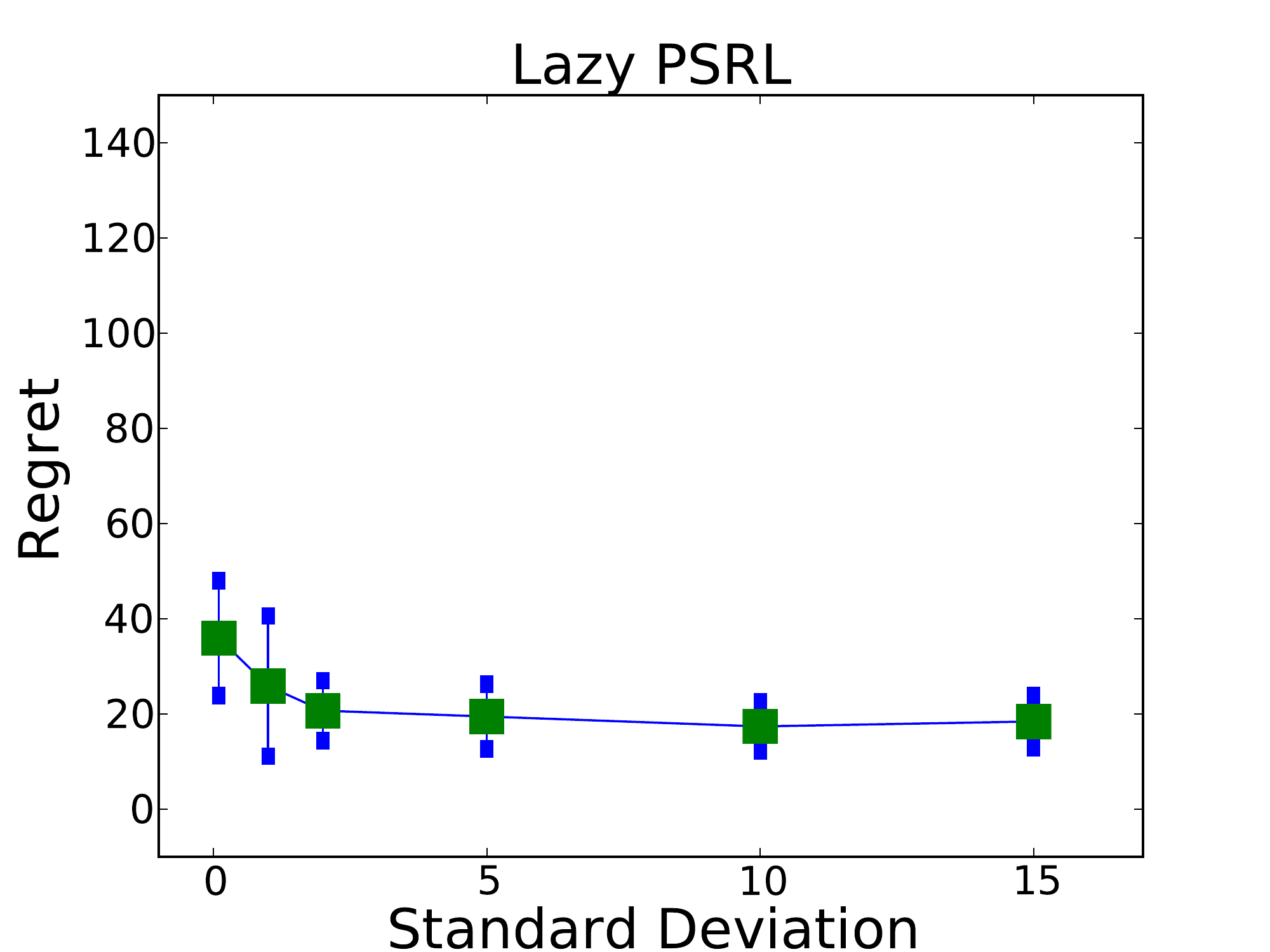}}
\end{tabular}
\caption{Regret of the \textsc{Lazy PSRL} algorithm with different priors. The prior is a zero mean Gaussian distribution with covariance matrix $\lambda^2 I$. The horizontal axis is $\lambda$.}
\label{figure:prior}
\end{figure}

\if0

\begin{figure}
\centering%
\begin{tabular}{c}
{\includegraphics[width=.5\textwidth]{./experiments/x1.pdf}} 
{\includegraphics[width=.5\textwidth]{./experiments/x2.pdf}} \\
{\includegraphics[width=.5\textwidth]{./experiments/a1.pdf}}
{\includegraphics[width=.5\textwidth]{./experiments/a2.pdf}}
\\
\end{tabular}
\caption{The trajectory of the state and action vectors. (Top left): $x_{cpu}$ vs. time. (Top right): $x_{mem}$ vs. time. (Bottom left): $a_{ka}$ vs. time. (Bottom right): $a_{mc}$ vs. time.}
\label{figure:lqr-xu}
\end{figure}

\begin{figure}
\centering%
\begin{tabular}{c}
{\includegraphics[width=.5\textwidth]{./experiments/A11.pdf}} 
{\includegraphics[width=.5\textwidth]{./experiments/A12.pdf}} \\
{\includegraphics[width=.5\textwidth]{./experiments/A21.pdf}}
{\includegraphics[width=.5\textwidth]{./experiments/A22.pdf}}
\\
\end{tabular}
\caption{The least-squares estimate for matrix $A$ vs time.}
\label{figure:lqr-A}
\end{figure}

\begin{figure}
\centering%
\begin{tabular}{c}
{\includegraphics[width=.5\textwidth]{./experiments/B11.pdf}} 
{\includegraphics[width=.5\textwidth]{./experiments/B12.pdf}} \\
{\includegraphics[width=.5\textwidth]{./experiments/B21.pdf}}
{\includegraphics[width=.5\textwidth]{./experiments/B22.pdf}}
\\
\end{tabular}
\caption{Least-squares estimate for matrix $B$ vs time.}
\label{figure:lqr-B}
\end{figure}

\fi


\bibliography{./biblio}

\newpage

\appendix 
\if0
\section{ACOI}
\label{app:ACOI}

The following examples, taken from \citep{Arapostathis-Borkar-Fernandez-Gaucherand-Ghosh-Marcus-1993,Hernandez-Lerma-Lasserre-1996}, show two cases when Assumption~\ref{ass:ACOE} is satisfied. 
\begin{example}\label{ex:ACOE}
Let $\cF(\cX)$ be the space of bounded functions on state space $\cX$, and $\cF_y(\cX)$ be the subspace of functions $f$ in $\cF(\cX)$ such that $f(y)=0$ for some given point $y\in\cX$. Define the span of a function $f$ as 
\[
s(f) \eqdef \sup_{x\in\cX} f(x) - \inf_{x\in\cX} f(x)\; .
\]
Recall the definition of a Markov Decision Process from Chapter~\ref{chapter:intro}. Assume that the loss is bounded, lower semi-continuous\footnote{A function $f$ is lower semi-continuous at $x_0$ if $\liminf_{x\to x_0} f(x) \ge f(x_0)$.}, non-negative, and inf-compact\footnote{The loss function $\ell$ is inf-compact if the set $\{a\in D(x)\, :\, \ell(x,a)\le r\}$ is compact for any $x$ in $\cX$ and $r$ in $\Real$.} on the set of admissible state-action pairs. Assume that the transition kernel is strongly continuous. \todoc{And what does this mean?} Define the mapping $T_y:\cF_y(\cX)\to \cF_y(\cX)$ by
\beq
\label{eq:T0mapping}
T_y f(x) = T f(x) - T f(y),\qquad x\in\cX\,,
\eeq  
where
\[
T f(x) = \min_{a\in D(x)} \left[ \ell(x,a) + \int f(y) p(dy|x,a) \right]\; .
\]
Define the total variation norm of a finite signed measure $\lambda$ on $\cX$ by
\[
\norm{\lambda}_V \eqdef \sup_{B\in \cB(\cX)} \lambda(B) - \inf_{B\in\cB(\cX)} \lambda(B)\;.
\]
In particular, we have
\[
\norm{p - q}_V = 2 \sup_{B\in \cB(\cX)} \abs{p(B) - q(B)}
\]
for probability measures $p$ and $q$. Suppose the transition kernel satisfies 
\beq
\label{eq:transition-kernel-assumption}
\forall (x,a),\,(x^\prime, a^\prime)\in \cZ,\qquad \norm{p(.|x,a)-p(.|x^\prime, a^\prime)}_V \le 2 \beta\,,
\eeq
\todoy{when this condition is not satisfied?!} \todoc{Think of Gaussians with their variance going to zero. They become unbounded. Deterministic systems are out of the game. You need stochasticity; uniformly!}
where $0<\beta<1$ and $\cZ$ is the set of admissible state-action pairs. Under this condition, it can be shown that $T_y$ is a contraction mapping. Thus, by the Banach Fixed-Point Theorem, we get that there is a unique function $h\in \cF_y(\cX)$ that satisfies $T_y h = h$. By substituting $h$ in \eqref{eq:T0mapping} we get the ACOE
\[
\forall x\in\cX,\qquad J + h(x) = Th(x)\,,
\]
where $J = T h(y)$.

To see an example when condition \eqref{eq:transition-kernel-assumption} holds, consider the case  when the state transition is given by
\[
x_{t+1} = f(x_t,a_t) + g(x_t) w_{t+1}\,,
\]
the state and action spaces are compact sets; $f:\cX\times \cA \to \Real$ and $g:\cX\to\Real$ are bounded, continuous, and $g(.)>0$; and $(w_{t})$ is a sequence of independent $\cN(0,I)$ random vectors. \todoy{verify this. the book only mentions this for $d=1$}
\todoy{so all parameters satisfy the assumption then?}
\end{example}
\begin{example}\label{ex:ACOI}
Assume that the loss is lower semi-continuous, non-negative, and inf-compact on the set of admissible state-action pairs. Assume that the transition kernel is strongly continuous.
Define the total discounted loss by 
\[
V_\gamma(\pi,x) = \EE{\sum_{t=1}^\infty \gamma^t \ell(x_t,\pi(x_t)) \, | \, x_1 = x}
\]
and the optimal total discounted loss by
\[
V_\gamma^*(x) = \inf_{\pi\in \Pi} V_\gamma(x,\pi)\,,
\]
where $\Pi$ is the class of all policies. \todoc{Which class!?}\todoy{changed to ``the'' class of policies} \todoc{All policies, I assume.. But it is the same as if you just said stationary policies (this is a theorem). But it is better to be more specific.} Suppose there exists a state $y\in\cX$ and constants $\beta\in(0,1)$ and $M\ge 0$ such that for all $\gamma\in[\beta,1)$,
\beq
\label{eq:discounted-loss-cond}
(1-\gamma) V_\gamma^*(y) \le M\;.
\eeq
Further, assume that there is a constant $N\ge 0$ and a non-negative function $b(.)$ on $\cX$ such that for any $x\in\cX$ and any $\gamma\in[\beta,1)$,
\beq
\label{eq:NVb}
-N\le V_\gamma^*(x) - V_\gamma^*(y) \le b(x)\;.
\eeq
\todoc{Note that above I have added $\mbox{}^*$. I think it was accidentally omitted.}
Under these assumptions, it can be shown that there exist $0<\rho^*\le M$ and a sequence $\gamma_n \uparrow 1$ such that for any $x\in\cX$,
\beq
\label{eq:gVr}
\lim_{n\to\infty} (1-\gamma_n) V_{\gamma_n}^*(x) = \rho^*\;.
\eeq
Further, under the same assumptions, it can be shown that there exists a constant $J$ and a function $h:\cX\to\Real$ such that $h(y)=0$, for any $x\in\cX$
\[
-N \le h(x) \le b(x)\,,
\] 
and $(J,h)$ satisfy the ACOI
\[
J + h(x) \ge \min_{a\in D(x)} \left[ \ell(x,a) + \int h(y) p(d y\, |\, x,a) \right]\;, \quad x\in \cX\,.
\]
Condition~\eqref{eq:discounted-loss-cond} is satisfied, for example, for bounded losses. 

We can show that the ACOE also has a solution by making the following additional assumptions: 
\begin{enumerate}[(i)]
\item
The function $b(.)$ in \eqref{eq:NVb} is measurable and that for any $x\in\cX$ and $a\in D(x)$, it satisfies $\int_\cX b(y)p(dy|x,a) < \infty$
\item
The sequence $(V_{\gamma_n}^*(x) - V_{\gamma_n}^*(y))$ is equicontinuous,\footnote{A class of functions $\cF$ is equicontinuous if for any $x\in\cX$, for any $\epsilon>0$, there exists a set $G$ such that $x\in G$ and for any $f\in\cF$ and $y\in G$,
\[
\abs{f(y)-f(x)} < \epsilon\;.
\]} where $(\gamma_n)$ is the sequence that satisfies \eqref{eq:gVr}.
\end{enumerate}
\todoc{Add extra condition here that guarantees that  \ref{ass:ACOEOptCost} holds, too.}
\end{example}
\fi


\section{Some Useful Lemmas}
\label{app:lem}

\begin{lem}
\label{lem:E3}
Let $V\in \PSD(m)$ be positive definite, $(M_t)_{t=1,2,\ldots}\subset \PSD(m)$ be positive semidefinite matrices
and define $V_t = V + \sum_{k=1}^{t-1} M_s$, $t=1,2,\ldots$.
If $\trace(M_t) \le L^2$ for all $t$, then 
\begin{align*}
\sum_{t=1}^T\min(1, \smallnorm{V_t^{-1/2}}_{M_t}^2 )
& \le 
2\left\{\log \det(V_{T+1}) - \log \det V\right\}\\ 
&\le 2\left\{m \log\left(\frac{\trace(V)+T L^2}{m}\right) - \log \det V\right\} \; .
\end{align*}
\end{lem}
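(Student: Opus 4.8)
The plan is to reduce the sum to a telescoping series through the multiplicative behaviour of the determinant under the additive updates $V_{t+1}=V_t+M_t$, and then to control each term by a scalar inequality. First I would record the identity
\[
\det(V_{t+1})=\det(V_t+M_t)=\det(V_t)\,\det\!\left(I+V_t^{-1/2}M_tV_t^{-1/2}\right),
\]
which holds because $V_t\succ 0$ is invertible. Writing $A_t=V_t^{-1/2}M_tV_t^{-1/2}\succeq 0$ and noting that the earlier definition of $\norm{\cdot}_M$ gives $\smallnorm{V_t^{-1/2}}_{M_t}^2=\norm{A_t}_2=\lambda_{\max}(A_t)$, the identity becomes $\log\det(V_{t+1})-\log\det(V_t)=\log\det(I+A_t)=\sum_i\log(1+\lambda_i)$, where $\lambda_1,\dots,\lambda_m\ge 0$ are the eigenvalues of $A_t$.

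For the first inequality I would establish the per-step bound $\min(1,\norm{A_t}_2)\le 2\log\det(I+A_t)$. Since $\log\det(I+A_t)=\sum_i\log(1+\lambda_i)\ge\log(1+\lambda_{\max})$, it suffices to prove the scalar fact that $\min(1,x)\le 2\log(1+x)$ for all $x\ge 0$: when $x\le 1$ this is the elementary estimate $\log(1+x)\ge x/2$ (whose derivative check is immediate), and when $x>1$ we have $2\log(1+x)>2\log 2>1$. Summing the per-step bound over $t=1,\dots,T$ telescopes, and since $V_1=V$ this yields $\sum_{t=1}^T\min(1,\smallnorm{V_t^{-1/2}}_{M_t}^2)\le 2(\log\det V_{T+1}-\log\det V)$.

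The second inequality follows from AM--GM applied to the eigenvalues of $V_{T+1}$: if $\mu_1,\dots,\mu_m>0$ are these eigenvalues, then $\det V_{T+1}=\prod_i\mu_i\le\big(\tfrac1m\sum_i\mu_i\big)^m=\big(\trace(V_{T+1})/m\big)^m$, so $\log\det V_{T+1}\le m\log(\trace(V_{T+1})/m)$. The trace bound $\trace(V_{T+1})=\trace(V)+\sum_{t=1}^T\trace(M_t)\le\trace(V)+TL^2$ then closes the chain. None of the steps is a serious obstacle; the only point requiring care is the scalar inequality $\min(1,x)\le 2\log(1+x)$ together with the reduction of the spectral norm to the single largest eigenvalue, which is what lets us discard all but one term of $\sum_i\log(1+\lambda_i)$ in the per-step bound.
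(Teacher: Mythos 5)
Your proposal is correct and follows essentially the same route as the paper: the telescoping determinant identity $\det(V_{t+1})=\det(V_t)\det(I+V_t^{-1/2}M_tV_t^{-1/2})$, the scalar bound $\min(1,x)\le 2\log(1+x)$ applied to the top eigenvalue, and the AM--GM/trace estimate $\log\det S\le m\log(\trace(S)/m)$ for the second inequality. If anything, your explicit case split for $\min(1,x)\le 2\log(1+x)$ on $x>1$ is slightly more careful than the paper's, which only cites the inequality for $x\in[0,1]$.
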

\begin{proof}
On the one hand, we have
\begin{align*}
\det(V_T) &= \det( V_{T-1} + M_{T-1} ) = \det( V_{T-1} ( I + V_{T-1}^{-\frac12} M_{T-1} V_{T-1}^{-\frac12}) ) \\
& = \det( V_{T-1} ) \det( I + V_{T-1}^{-\frac12} M_{T-1} V_{T-1}^{-\frac12}) \\
& \vdots \\
& = \det( V ) \prod_{t=1}^{T-1} \det( I + V_{t}^{-\frac12} M_{t} V_{t}^{-\frac12}) \,.
\end{align*}
One the other hand, thanks to $x \le 2 \log(1+x)$, which holds for all $x\in [0,1]$, 
\begin{align*}
\sum_{t=1}^T \min(1, \smallnorm{V_t^{-\frac12} M_t V_t^{-\frac12} }_2 ) 
& \le 2 \sum_{t=1}^T \log( 1 + \smallnorm{V_t^{-\frac12} M_t V_t^{-\frac12} }_2 ) \\
& \le 2 \sum_{t=1}^T \log( \det( I + V_{t}^{-\frac12} M_{t} V_{t}^{-\frac12}) ) \\
& = 2 ( \log (\det V_{T+1}) - \log( \det V ) )\,,
\end{align*}
where the second inequality follows since $V_{t}^{-\frac12} M_{t} V_{t}^{-\frac12}$ is positive semidefinite, hence
all eigenvalues of $I+V_{t}^{-\frac12} M_{t} V_{t}^{-\frac12}$ are above one
and the largest eigenvalue of $I+V_{t}^{-\frac12} M_{t} V_{t}^{-\frac12}$ is $1 + \smallnorm{V_t^{-\frac12} M_t V_t^{-\frac12} }_2$, proving the first inequality.
For the second inequality, note that for any positive definite matrix $S\in \PSD(m)$,
$\log \det S \le m \log( \trace(S)/m)$. Applying this to $V_T$ and using the condition that $\trace(M_t)\le L^2$, 
we get $\log \det V_T \le m \log( (\trace(V)+T L^2)/m )$. Plugging this into the previous upper bound, we get the second part of the statement.
\end{proof}
\begin{lem}[Lemma~11 of \citet{Abbasi-Yadkori-Szepesvari-2011}] 
\label{lem:514}
Let $A\in \Real^{m\times m}$ and $B\in \Real^{m\times m}$ be positive semi-definite matrices such that $A\succ B$. Then, we have 
\[
\sup_{X\neq 0} \frac{\norm{X^\top A X}_2}{\norm{X^\top B X}_2 } \le \frac{\det(A)}{\det(B)} \; .
\] 
\end{lem}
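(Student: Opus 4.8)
The plan is to reduce the spectral-norm ratio to a single generalized eigenvalue and then exploit the strict inequality $A\succ B$. First I would dispose of the degenerate case: if $B$ is singular then $\det(B)=0$, the right-hand side is $+\infty$, and the inequality holds trivially; so I may assume $B$ is positive definite and work with $B^{-1/2}$.

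Next, let $\mu_1\ge \mu_2 \ge\cdots\ge \mu_m>0$ denote the eigenvalues of $B^{-1/2}AB^{-1/2}$, i.e.\ the generalized eigenvalues solving $Av=\mu Bv$. The hypothesis $A\succ B$ is equivalent to $B^{-1/2}AB^{-1/2}\succ I$, so every $\mu_i>1$. Since $B^{-1/2}AB^{-1/2}\preceq \mu_1 I$, conjugating by $B^{1/2}$ gives $A\preceq \mu_1 B$, and conjugating by an arbitrary $X$ preserves the Loewner order, so $X^\top A X \preceq \mu_1\, X^\top B X$. Both sides are positive semidefinite and $\lambda_{\max}(\cdot)$ is monotone with respect to the Loewner order, whence $\norm{X^\top A X}_2 = \lambda_{\max}(X^\top A X) \le \mu_1\,\lambda_{\max}(X^\top B X) = \mu_1 \norm{X^\top B X}_2$ for every $X\neq 0$. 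Taking the supremum, the left-hand side of the lemma is at most $\mu_1$, and because everything is uniform in $X$ no attainment of the supremum is required.

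Finally I would close the gap between $\mu_1$ and the determinant ratio. Since $\det(B^{-1/2}AB^{-1/2}) = \det(A)/\det(B) = \prod_{i=1}^m \mu_i$ and each factor $\mu_i>1$, we have $\mu_1 \le \prod_{i=1}^m \mu_i = \det(A)/\det(B)$. Chaining this with the previous bound yields the claim. The argument is short; the only point that needs care is the reduction of the spectral-norm ratio to $\mu_1$ via Loewner monotonicity of $\lambda_{\max}$, after which the slack between the maximum $\mu_1$ and the full product $\prod_i\mu_i$ is exactly what the strict hypothesis $A\succ B$ (all $\mu_i>1$) pays for.
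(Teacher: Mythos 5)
Your proof is correct: the reduction to the largest generalized eigenvalue $\mu_1$ of $B^{-1/2}AB^{-1/2}$ via Loewner monotonicity of $\lambda_{\max}$, followed by $\mu_1\le\prod_i\mu_i=\det(A)/\det(B)$ using that $A\succ B$ forces every $\mu_i>1$, is exactly the standard argument. The paper itself gives no proof of this statement (it cites Lemma~11 of \citet{Abbasi-Yadkori-Szepesvari-2011}), and the proof in that reference proceeds by essentially the same generalized-eigenvalue computation, so there is nothing to add.
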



\section{Proofs}
\label{app:proofs}

\begin{proof}[Proof of Proposition~\ref{lem:rangevaluefun}]
Note that if ACOE \eqref{eq:ACOE1} holds for $h$, then for any constant $C$, it also holds that 
\[
J(\Theta) + (h(x,\Theta) + C) =  \min_{a\in \cA} \left\{ \ell(x,a) + \int (h(y,\Theta) + C) p(d y\, |\, x,a,\Theta) \right\} \;.
\]
As by our assumption, the value function is bounded from below, 
we can choose $C$ such that the $h'(\cdot,\Theta) = h(\cdot,\Theta)+C$ is nonnegative valued.
In fact, if $h$ assumes a minimizer $x_0$, by this reasoning, without loss of generality,
we can assume that $h(x_0)= 0$ and so for any $x\in \cX$, 
$0\le h(x) = h(x)-h(x_0) \le B \norm{x-x_0}\le BX$. 
The argument trivially extends to the general case when $h$ may fail to have a minimizer over $\cX$. 
\end{proof}

\begin{proof}[Proof of Theorem~\ref{thm:LQR++}]
The proof follows that of the main result of \citet{Abbasi-Yadkori-Szepesvari-2011}.
First, we decompose the regret into a number of terms, which are then bound one by one. 
Define $\widetilde{x}_{t+1}^a=f(x_t,a,\TTh_t,z_{t+1})$, where $f$ is the map of Assumption~\ref{ass:lindyn}
 and let $h_t(x) = h(x, \TTh_t)$
be  the solution of the ACOE underlying $p(\cdot|x,a,\TTh_t)$. 
By Assumption~\ref{ass:ACOE}~\eqref{ass:ACOE-exist}, $h_t$ exists
and $h_t(x)\in [0,H]$ for any $x\in \cX$.
By Assumption~\ref{ass:lindyn}, for any $g\in L^1(p(\cdot|x_t,a,\TTh_t))$, 
$\int g(dy) p(dy|x_t,a,\TTh_t) = \EE{ g(\widetilde{x}_{t+1}^{a}) | \cF_t,\TTh_t}$.
Hence, from~\eqref{eq:ACOE1} and~\eqref{eq:stsuboptimality},
\begin{align*}
J(\TTh_t) + h_t(x_t) &= \min_{a\in \cA} \left\{ \ell(x_t,a) + \EE{h_t(\widetilde{x}_{t+1}^a)\,|\,\cF_t, \TTh_t} \right\} \\
&\ge \ell(x_t,a_t) + \EE{h_t(\widetilde{x}_{t+1}^{a_t})\,|\,\cF_t, \TTh_t} - \sigma_t\\
&= \ell(x_t,a_t) + \EE{h_t( x_{t+1}+\epsilon_t)\,|\,\cF_t, \TTh_t} - \sigma_t\,,
\end{align*}
where $\epsilon_t=\widetilde{x}_{t+1}^{a_t} -x_{t+1}$.
As $J(\cdot)$ is a deterministic function and conditioned on $\cF_{\tau_t}$, $\TTh_t$ and $\Theta_*$ have the same distribution, 
\begin{align*}
R(T) &= \sum_{t=1}^T \EE{\ell(x_t,a_t) - J(\Theta_*)}= \sum_{t=1}^T \EE{\EE{\ell(x_t,a_t) - J(\Theta_*)\,|\, \cF_{\tau_t} }}\\
&=  \sum_{t=1}^T \EE{\EE{\ell(x_t,a_t) - J(\TTh_t)\,|\, \cF_{\tau_t} }} =  \sum_{t=1}^T \EE{\ell(x_t,a_t) - J(\TTh_t)} \\
&\le \sum_{t=1}^T \EE{h_t(x_t) - \EE{h_t(x_{t+1}+\epsilon_t)\,|\, \cF_t,\TTh_t}} + \sum_{t=1}^T \EE{\sigma_t}\\
&= \sum_{t=1}^T \EE{h_t(x_t) - h_t(x_{t+1}+\epsilon_t)} + \sum_{t=1}^T \EE{\sigma_t}\;.
\end{align*}
Let $\Sigma_T = \sum_{t=1}^T \EE{\sigma_t}$ be the total error due to the approximate optimal control oracle.
Thus, we can bound the regret using 
\begin{align*}
R(T) &\le \Sigma_T + \EE{h_1(x_1) - h_{T+1}(x_{T+1})} + \sum_{t=1}^T \EE{h_{t+1}(x_{t+1}) - h_t(x_{t+1}+\epsilon_t)}\\
&\le \Sigma_T + H + \sum_{t=1}^T \EE{h_{t+1}(x_{t+1}) - h_t(x_{t+1}+\epsilon_t)}\;,
\end{align*}
where the second inequality follows because $h_1(x_1)\le H$ and $-h_{T+1}(x_{T+1})\le 0$.
Let $A_t$ denote the event that the algorithm has changed its policy at time t. We can write
\begin{align*}
R(T) - ( \Sigma_T + H) 
&\leq  \sum_{t=1}^T \EE{h_{t+1}(x_{t+1}) - h_t(x_{t+1}+\epsilon_t)}\\
&=  \sum_{t=1}^T \EE{h_{t+1}(x_{t+1}) - h_t(x_{t+1})}+\sum_{t=1}^T \EE{h_{t}(x_{t+1})-h_t(x_{t+1}+\epsilon_t)}\\
&\leq   2 H \sum_{t=1}^T \EE{\one{A_t}} + B \sum_{t=1}^T \EE{\norm{\epsilon_t}}\;,
\end{align*}
where we used again that $0\le h_t(x)\le H$, and also Assumption~\ref{ass:ACOE}~\eqref{ass:grad}. 
Define
\[
R_1 = H \sum_{t=1}^T \EE{\one{A_t}}\,, \qquad R_2 = B \sum_{t=1}^T \EE{\norm{\epsilon_t}}\; .
\]
It remains to bound $R_2$ and to show that the number of switches is small.

\paragraph{Bounding $R_2$}

Let $\tau_t\le t$ be the last round before time step $t$ when the policy is changed. So $\TTh_t = \TTh_{\tau_t}$.
Letting $M_t = M(x_t,a_t)$, by Assumption~\ref{ass:lindyn},
\[
\EE{ \norm{\epsilon_t} } \le  \EE{ \norm{ \TTh_t - \Theta_*}_{M_t} }.
\]
Further,
\[
\norm{ \TTh_t - \Theta_*}_{M_t} 
\le  \norm{ \TTh_t - \hTh_t}_{M_t}  + \norm{\hTh_t - \Theta_*}_{M_t}\,.
\]
For $\Theta\in\{\TTh_{\tau_t}, \Theta_*\}$ we have that
\begin{align*}
\norm{ \Theta - \hTh_{\tau_t}}_{M_t}^2 
&= \norm{  (\Theta -  \hTh_{\tau_t})^\top M_t (\Theta -  \hTh_{\tau_t}) }_2\\
&= \norm{  (\Theta -  \hTh_{\tau_t})^\top V_t^{\frac12} V_t^{-\frac12} M_t V_t^{-\frac12} V_t^{\frac12} (\Theta -  \hTh_{\tau_t}) }_2\\
&\le \norm{  (\Theta -  \hTh_{\tau_t})^\top V_t^{\frac12} }_2^2 \norm{V_t^{-\frac12} M_t V_t^{-\frac12}}_2
=\norm{  (\Theta -  \hTh_{\tau_t})^\top V_t^{\frac12} }_2^2 \norm{V_t^{-\frac12} }_{M_t}^2\,,
\end{align*}
where the last inequality follows because $\norm{\cdot}_2$ is an induced norm and induced norms are sub-multiplicative.
Hence, we have that
\begin{align*}
\sum_{t=1}^T \EE{\norm{\Theta-\widehat\Theta_{\tau_t}}_{M_t}} 
&\le \EE{\sum_{t=1}^T \norm{(\Theta-\widehat\Theta_{\tau_t})\ttop V_t^{1/2}}_2 
									\norm{V_t^{-1/2}}_{M_t} } \\
&\le \EE{ \sqrt{\sum_{t=1}^T \norm{(\Theta-\widehat\Theta_{\tau_t})\ttop V_t^{1/2}}_2^2 } \sqrt{\sum_{t=1}^T \norm{V_t^{-1/2}  }_{M_t}^2 }  } \\
&\le \sqrt{\EE{ \sum_{t=1}^T \norm{(\Theta-\widehat\Theta_{\tau_t})\ttop V_t^{1/2}}_2^2 }} \sqrt{ \EE{ \sum_{t=1}^T \norm{V_t^{-1/2}   }_{M_t}^2 } } \,,
\end{align*}
where the first inequality uses H\"older's inequality, and the last two inequalities use  Cauchy-Schwarz. By Lemma~\ref{lem:E3} in Appendix~\ref{app:lem}, using Assumption~\ref{ass:feature-mapping}, we have that
\[
\sum_{t=1}^T \min\left(1,  \smallnorm{V_t^{-1/2}}_{M_t}^2 \right) \le 2 m \log \left(\frac{\trace(V)+ T \Phi^2}{m}\right) \; .
\]
Denoting by $\lambda_{\min}(V)$ the minimum eigenvalue of $V$, a simple argument shows \todoc{Write it up, lemma in the appendix?}
$ \norm{V_t^{-1/2}}_{M_t}^2 \le  \norm{M_t}_2/\lambda_{\min}(V)\le \Phi^2/\lambda_{\min}(V) $, where in the second inequality we used Assumption~\ref{ass:feature-mapping} again.
Hence,
\begin{align*}
\sum_{t=1}^T \norm{V_t^{-1/2}}_{M_t}^2 
&\le \sum_{t=1}^T \min\left( \Phi^2/\lambda_{\min}(V),   \norm{V_t^{-1/2}  }_{M_t}^2\right) \\
&\le \sum_{t=1}^T \max\left( 1, \Phi^2/\lambda_{\min}(V) \right)  \min\left(1,  \norm{V_t^{-1/2}  }_{M_t}^2 \right) \; .
\end{align*}
Thus, 
\begin{align*}
\sum_{t=1}^T \EE{\norm{\Theta-\widehat\Theta_{\tau_t}}_{M_t}^2} 
&\le   \sqrt{ \EE{2 m \max\left( 1, \frac{\Phi^2}{\lambda_{\min}(V)} \right) \log \left( \frac{\trace(V)+T \Phi^2}{m} \right)}} \\
&\qquad\times\sqrt{\EE{\sum_{t=1}^T \norm{(\Theta-\widehat\Theta_{\tau_t})\ttop V_t^{1/2}}_2^2 } }  \; .
\end{align*}
By Lemma~\ref{lem:514} of  Appendix~\ref{app:lem} and the choice of $\tau_t$, we have that
\begin{align}
\norm{(\Theta-\widehat\Theta_{\tau_t})\ttop V_t^{1/2}}_2 
&\le \sqrt{\frac{\det(V_t)}{\det(V_{\tau_t})}} \norm{(\Theta-\widehat\Theta_{\tau_t})\ttop V_{\tau_t}^{1/2}}_2 
\le \sqrt{2} \norm{(\Theta-\widehat\Theta_{\tau_t})\ttop V_{\tau_t}^{1/2}}_2 \; .
\label{eq:eq1}
\end{align}
Thus,
\begin{align*}
\EE{  \sum_{t=1}^T \norm{(\Theta-\widehat\Theta_{\tau_t})\ttop V_t^{1/2}}_2^2 }  
&\le 2 \EE{ \sum_{t=1}^T \norm{(\Theta-\widehat\Theta_{\tau_t})\ttop V_{\tau_t}^{1/2}}_2^2 }  
	&\text{(by \eqref{eq:eq1})} \\
&=  2\EE{  \sum_{t=1}^T \EE{   \norm{(\Theta-\widehat\Theta_{\tau_t})\ttop V_{\tau_t}^{1/2}}_2^2 \ \middle| \ \cF_{\tau_t} }  }   
	& \text{(by the tower rule)} \\
&\le  2 C T  \; . & \text{(by Assumption~\ref{ass:bnd-var})}
\end{align*}
Let $G_T =  2 m \max\left( 1, \frac{\Phi^2}{\lambda_{\min}(V) }\right) \log \left(\frac{\trace(V)+ T \Phi^2}{m} \right)$. Collecting the inequalities, we get
\begin{align*}
R_2 &= B \sum_{t=1}^T \EE{\norm{(\TTh_{\tau_t}-\Theta_*)\ttop\phi_t}} 
\le \sqrt{\EE{G_T}} \sqrt{C T} \\
&\le 4 B \sqrt{ m \max\left( 1, \frac{\Phi^2}{\lambda_{\min}(V)} \right) \log \left(\frac{\trace(V)+ T \Phi^2}{m} \right)}  \sqrt{C T} \; .
\end{align*}

\paragraph{Bounding $R_1$}
\label{sec:rare-switch}

If the algorithm has changed the policy $K$ times up to time $T$, then we should have that $\det(V_T)\geq 2^K$. On the other hand,
from Assumption~\ref{ass:feature-mapping}
 we have $\lambda_{\max}(V_T)\leq \trace(V)+(T-1)\Phi^2$. 
Thus, it holds that $2^K \leq (\trace(V)+ \Phi^2 T)^{m}$. Solving for $K$, we get $K\leq m\log_2 (\trace(V)+ \Phi^2 T)$. Thus,
\[
R_1 = H \sum_{t=1}^T \EE{\one{A_t}} \le H m\log_2 (\trace(V)+ \Phi^2 T) \; .
\]
Putting together the bounds obtained for $R_1$ and $R_2$, we get the desired result.
\end{proof}


\begin{proof}[Proof of Theorem~\ref{thm:LQR++-stabilized}]
First notice that \cref{thm:LQR++} continues to hold if 
Assumption~\ref{ass:feature-mapping} is replaced by the following weaker assumption:
\begin{ass}[Boundedness Along Trajectories]
\label{ass:feature-mapping-2}
There exist $\Phi>0$ such that for all $t\ge 1$, $\EE{\trace(M(x_t,a_t))}\le \Phi^2$. 
\end{ass} 
The reason this is true is because \ref{ass:feature-mapping} is used only in a context where $\EE{\log( \trace(V+\sum_{s=1}^{T} M_t ) )}$ needs to be bounded. Using that $\log $ is concave, we get 
\[
\EE{\log( \trace(V+\textstyle\sum_{s=1}^{T} M_t ) )}
\le
\log\left( \EE{\trace(V+\textstyle\sum_{s=1}^{T} M_t ) }\right) \le \log( \trace(V)+T \Phi^2).
\]

With this observation,
the result follows from~\cref{thm:LQR++} applied to Lazy PSRL and $\{p'(\cdot|x,a,\Theta)\}$
as running Stabilized Lazy PSRL for $t$ time steps in $p(\cdot|x,a,\Theta_*)$ results in the same total expected cost
as running Lazy PSRL for $t$ time steps in $p'(\cdot|x,a,\Theta_*)$ thanks to the definition of Stabilized Lazy PSRL and $p'$.

Hence, all what remains is to show that the conditions of \cref{thm:LQR++} are satisfied when it is used with $\{p'(\cdot|x,a,\Theta)\}$. In fact, \ref{ass:ACOE} and \ref{ass:bnd-var} hold true by our assumptions. 
Let us check Assumption~\ref{ass:ACOE} next.
Defining $f'(x,a,\Theta,z) = f(x,a,\Theta,z)$ if $x\in \cR$ and $f'(x,a,\Theta,z) = f(x,\pistab(x),\Theta,z)$ otherwise, we see that $x_{t+1} = f'(x_t,a_t,\Theta,z_{t+1})$. Further, defining $M'(x,a) = M(x,a)$ if $x\in \cR$ and $M'(x,a) = M(x,\pistab(x))$ otherwise, we see that,
thanks to the second part that of \ref{ass:lindyn} applied to $p(\cdot|x,a,\Theta)$,
 for $y = f'(x,a,\Theta,z)$, $y' = f'(x,a,\Theta',z)$, 
$
\EE{\norm{y-y'}} \le \EE{ \norm{\Theta-\Theta'}_{M(x,a)} }
$
if $x\in \cR$ and 
$
\EE{\norm{y-y'}} \le \EE{ \norm{\Theta-\Theta'}_{M(x,\pistab(x))}} 
$
otherwise. Hence, $\EE{\norm{y-y'}} \le EE{ \norm{\Theta-\Theta'}_{M'(x,a)}}$, thus showing that \ref{ass:lindyn} holds for $p'(\cdot|x,a,\Theta)$ when $M$ is replaced by $M'$.
Now, Assumption~\ref{ass:feature-mapping-2} follows from Assumption~\ref{ass:stbl-cont}.

\end{proof}


\begin{proof}[Proof of Corollary~\ref{cor:LQR++-finite}]
We prove the corollary by showing that conditions of Theorem~\ref{thm:LQR++} are satisfied.
 
\underline{Smoothly Parameterized Dynamics}: 
Because $\EE{y | x,a} = \Theta^\top \phi (x,a)$, $\EE{y' | x,a} = \Theta^{'\top} \phi (x,a)$, and $y$ and $y'$ have only one non-zero element,
\begin{align*}
\EE{\norm{y - y'}} &= \sqrt{2} \Prob{y \neq y'} = \sqrt{2} \left(1 - \Prob{y = y'} \right) \\
&= \sqrt{2} \left(1 - \Theta_{(x,a), :}^\top \Theta_{(x,a),:}' \right) = \frac{\sqrt{2}}{2} \norm{\Theta_{(x,a),:} - \Theta_{(x,a),:}'}^2 \,,
\end{align*} 
where the last step holds by the fact that each row of $\Theta$ and $\Theta'$ sum to one. 

\underline{Concentrating Posterior}: 
Let $N = (\Theta_* - \widehat \Theta_t)^\top$, $\alpha_{s,a,s'} = \alpha_{s'} + c_t(s,a,s')$ and $\overline \alpha_{s,a} = \sum_{s'} \alpha_{s,a,s'} = V_{t,(n(a-1)+s,n(a-1)+s )}$. We have that
\todoc[size=\tiny]{Note to myself: Check these.}
\begin{align*}
\EE{   \norm{N V_{t}^{1/2}}^2 \ \middle| \ \cF_{t} } &\le \EE{   \norm{N V_{t}^{1/2}}_F^2 \ \middle| \ \cF_{t} } \\
&= \EE{\sum_{s,a} V_{t,(n(a-1)+s,n(a-1)+s )} \sum_{s'} N_{s', n(a-1)+s}^2  \ \middle| \ \cF_{t}} \\
&= \sum_{s,a} \overline \alpha_{s,a} \sum_{s'} \EE{ N_{s', n(a-1)+s}^2  \ \middle| \ \cF_{t}} \; .
\end{align*}
Because each row of $\Theta_*$ has a Dirichlet distribution and rows of $\widehat \Theta_t$ are means of these distributions, $\EE{ N_{s', n(a-1)+s}^2  \ \middle| \ \cF_{t}}$ is simply the variance of the corresponding Dirichlet variable. Thus,   
\begin{align*}
\EE{   \norm{N V_{t}^{1/2}}^2 \ \middle| \ \cF_{t} } &\le \sum_{s,a} \sum_{s'} \frac{\overline \alpha_{s,a} \alpha_{s,a,s'} (\overline \alpha_{s,a} -\alpha_{s,a,s'})}{\overline \alpha_{s,a}^2 (1+\overline \alpha_{s,a})} \le n^2 d \; .
\end{align*}


\underline{Boundedness}: this holds by finiteness of the state space.  

\end{proof}

\begin{proof}[Proof of Corollary~\ref{cor:LQR++-linearly}]
We prove the corollary by showing that conditions of Theorem~\ref{thm:LQR++-stabilized} are satisfied.
 
\underline{Smoothly Parameterized Dynamics}: 
Because $y  = \Theta^\top \phi (x,a) + w$, $y'  = \Theta^{'\top} \phi (x,a) + w$, we have 
\[
\norm{y - y'}^2 = \norm{\Theta - \Theta'}_{\phi(x,a) \phi(x,a)^\top }^2 \;.
\]

\underline{Concentrating Posterior}: 
Let $\Lambda$ be a random variable with probability distribution function
\[
P(\lambda) \propto \exp\left( -\frac{1}{2} \left(\lambda - \widehat \Theta_{t,(:,i)} \right)^\top V_t \left( \lambda - \widehat \Theta_{t,(:,i)} \right) \right) \; .
\]
Notice that $\left( \Lambda - \widehat \Theta_{t,(:,i)} \right)^\top V_t^{1/2} = Z \sim \cN(0, I)$ has the standard normal distribution. Hence $\Prob{\abs{Z_j} > \alpha} \le e^{-\alpha^2/2}$. 
Thus, since $\Prob{\norm{Z} > \alpha} \le m e^{-\alpha^2/(2 m^2)}$, we have
\[
\EE{   \norm{ \left(\Theta_{*, (:,i)} - \widehat\Theta_{t, (:,i)} \right)\ttop V_{t}^{1/2}}^2 \ \middle| \ \cF_{t} } = \EE{\norm{Z}^2 \ \middle| \ \cF_{t}} = \int_{0}^\infty \Prob{\norm{Z}^2 > \epsilon} \le 2 m^3 \; .
\]
Thus, 
\begin{align*}
\EE{   \norm{(\Theta_*-\widehat\Theta_{t})\ttop V_{t}^{1/2}}^2 \ \middle| \ \cF_{t} } &\le \EE{   \norm{(\Theta_*-\widehat\Theta_{t})\ttop V_{t}^{1/2}}_F^2 \ \middle| \ \cF_{t} } \\
&= \sum_{i=1}^n \EE{   \norm{ \left(\Theta_{*,(:,i)} - \widehat\Theta_{t, (:,i)} \right)\ttop V_{t}^{1/2}}^2 \ \middle| \ \cF_{t} } \\
&\le 2 n m^3 \;.
\end{align*}
This shows that Assumption~\ref{ass:bnd-var} is satisfied. 



\end{proof}

\end{document}